\newtheorem{theorem}{Theorem}
\newtheorem{assumption}{Assumption}
\title{Ordering-Based Causal Discovery for Linear and Nonlinear Relations}
\author{%
  Zhuopeng Xu \quad
  Yujie Li \quad
  Cheng Liu \quad
  Ning Gui\thanks{Corresponding author.} \\
  School of Computer Science and Engineering\\
  Central South University\\
  \texttt{\{xuzhuopeng, yujieli\}@csu.edu.cn, \{liuchengstudy, ninggui\}@gmail.com}
}
\begin{document}

\maketitle

\begin{abstract}

Identifying causal relations from purely observational data typically requires additional assumptions on relations and/or noise. Most current methods restrict their analysis to datasets that are assumed to have pure linear or nonlinear relations, which is often not reflective of real-world datasets that contain a combination of both. This paper presents CaPS, an ordering-based causal discovery algorithm that effectively handles linear and nonlinear relations. CaPS introduces a novel identification criterion for topological ordering and incorporates the concept of "parent score" during the post-processing optimization stage. These scores quantify the strength of the average causal effect, helping to accelerate the pruning process and correct inaccurate predictions in the pruning step. Experimental results demonstrate that our proposed solutions outperform state-of-the-art baselines on synthetic data with varying ratios of linear and nonlinear relations. The results obtained from real-world data also support the competitiveness of CaPS. Code and datasets are available at \url{https://github.com/E2real/CaPS}.

\end{abstract}

\section{Introduction}

% Causal structure and causal discovery have many important applications, while interventional data from randomized trials are not available. To address the issue, we focus on causal discovery, i.e., modeling an underlying Directed Acyclic Graph (DAG) from purely observational data. DAG learning has been tackled by score-based methods or order-based methods. In general, the score-based methods use continuous optimization formulation to fit relationships between variables, however, the large search space over DAGs and the global acyclicity constraint still make the DAG learning difficult. Order-based methods are more efficient, and tackle the problem by learn a topological ordering of the nodes to construct an optimal DAG. The ordering goes from the leaf nodes to the root nodes, thus acyclicity will be naturally constrained. 

% Existing DAG learning methods discover latent causal relations in purely observational data from different angles and normally with diverse assumptions. Continuous optimization methods such as NOTEARS (***) and GraNDAG (***) based on penalizing the exponential of the adjacency matrix require post-processing to return a valid DAG. Order-based methods learn a topological ordering and force the acyclicity constraint naturally. However, they normally make strong assumptions about structural equation models (SEMs) and hardly generalize to other SEMs. 

Causal discovery uncovers latent causal relationships within data by modeling a Directed Acyclic Graph (DAG) connecting various variables. This field is of significant importance in domains such as biology \cite{sachs2005causal}, epidemiology \cite{robins2000marginal}, and finance \cite{sanford2012bayesian}. Due to the considerable expense associated with conducting interventional experiments, the recent emphasis on causal discovery has gradually shifted from discovery with interventional data~\cite{ke2019SDI,ke2020CRN,yang2021CausalVAE} to discovery solely based on observational data. 

% it is difficult to obtain large amounts of interventional data in reality. Thus, causal discovery under observational data has more wider application scenarios. 

% In the absence of interventional data, identifying latent causal relations in purely observational data is extremely difficult and normally with diverse assumptions (
In general, the problem of causal discovery from observational data faces the identifiability issue. Different generative models with different causal relations might produce the same data distribution. Many recent works try to have uniquely identified DAG by placing different types of assumptions on the noise and/or relations, e.g., Shimizu et al.~\cite{shimizu2006NonGaussian} prove that linear causal relations with non-Gaussian additive noise can be identifiable; Peters and B{\"u}hlmann~\cite{peters2014EqualVariances} prove that Gaussian linear structural equation models (SEMs) with equal variances are identifiable. For nonlinear causal relations, Peters et al.~\cite{peters2014ANM} relax the assumption of noise and proves the identifiability of DAG. From the discussion mentioned above, existing approaches normally limit their discussions to distributions with either pure linear or pure nonlinear relations.

\begin{figure}[h]
    \centering
    \includegraphics[width=0.9\linewidth]{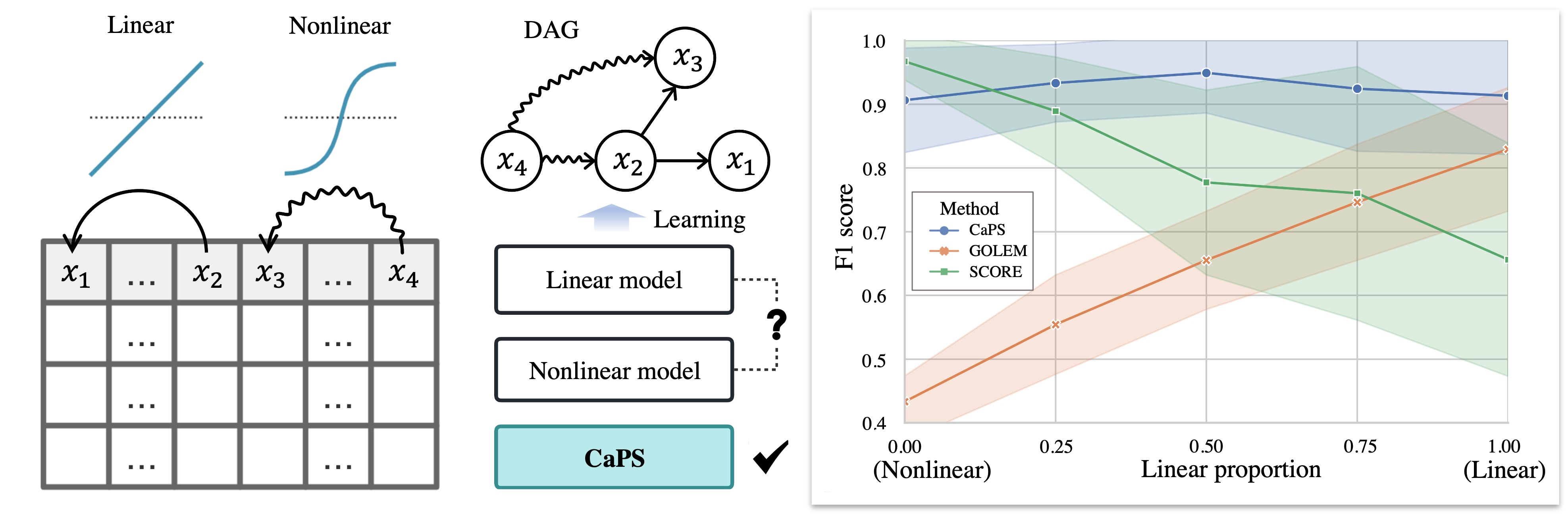}
    % \caption{Performance of different solutions under datasets with different linear proportions.}
    \caption{Performance of different solutions under datasets with different linear proportions. Since we don't know whether the real data is linear or nonlinear, it is difficult to choose an effective model. Thus, we need a method that works well in both linear and nonlinear and most possibly mixed cases.}
    \label{mixed}
\end{figure}

However, real-world data often contain both types of causal relations and run against their basic assumptions. These approaches work well when the observational data match their prespecified (non-)linear or nonlinear relations while suffering significant performance loss when their assumptions mismatch. Fig.~\ref{mixed} illustrates the performance of three solutions: GOLEM~\cite{ng2020GOLEM} for linear relations, SCORE~\cite{rolland2022score} for nonlinear relations, and our proposed CaPS, on synthetic data with varying proportions of linear relations. The performance of SCORE decreases as the linear ratio increases, while GOLEM performs poorly when the linear relation ratio is low. This indicates that approaches with strong restrictions on the types of relations are not suitable for real-world applications. Consequently, it is necessary to find a unified causal learning framework to capture both types of relations.

Ordering-based methods \cite{teyssier2012ordering} divide the casual discovery process into two subtasks: (i) topological ordering and (ii) post-processing. It has been shown to have the capability to reduce the complexity of DAG discovery while keeping the acyclicity constraint. 
In addition, ordering-based methods guide the direction of causal relations, thus avoiding the fitting of a potential inverse model.
% \textcolor{red}{Although the first step still requires solving a combinatorial problem, the set of combinations is much smaller than the set of DAGs, which grows superexponentially with the number of nodes.}
However, existing ordering-based approaches still face the same problem of relying on the assumption of linear, e.g., LISTEN\cite{ghoshal2018LISTEN} or nonlinear causal relations, e.g., SCORE. 

% For ordering-based methods, the key to unifying linear and nonlinear causal discovery is to identify some criteria that are unified in different causal relations. Inspired by SCORE, we find these unified criterions exist in the Jacobian of data log-likelihood $\nabla \log p(\mathbf{x})$, which is called score. For topological ordering, using score's Jacobian, we propose a new unified criterion for leaf node discrimination that works under both linear and nonlinear causal relations. Besides, we consider that the strength of causal effect should be approximated in a global perspective. For post-processing, this average causal effect can guide the model to retain high-confidence causal relationships and discard low-confidence causal relationships. Thus, we further extract a unified representation from score's Jacobian that can indicate the average causal effect in both linear and nonlinear causal relations, which we coin parent score. Based on above unified criterions, this paper proposes a unified framework for  \underline{Ca}usal Discovery with \underline{P}arents \underline{S}core (denoted as CaPS).
This paper introduces a unified approach, Causal Discovery with Parent Score (CaPS), that does not rely on linear or nonlinear assumptions, within the scope of ordering-based casual discovery . To determine the topological ordering, we propose a novel unified criterion for distinguishing leaf nodes by utilizing the expectation of the Hessian of the data log-likelihood. Additionally, inspired by the average treatment effect (ATE) in estimating causal effects \cite{guo2020causal_effect},  the parent score, a new metric, is proposed to represent the average causal effects of all samples. No matter whether the causal relations are linear or nonlinear, this metric can be used to effectively guide parent selection in post-processing.
% we define and extract parent score from the Hessian of the data log-likelihood, which can approximate average causal effects and direct the selection of parents.

% This paper proposes a unified framework for \underline{Ca}usal Discovery with \underline{P}arents \underline{S}core (denoted as CaPS that can recover the full causal graph for both linear and nonlinear relations with the so-called Parent Score metric. In the case of linear and nonlinear additive Gaussian noise models, CaPS provides a new leaf node discrimination method to learn DAGs and uses the average causal effect to optimize the exact DAGs.

\noindent\textbf{Contributions.} 1) A novel ordering criterion is proposed for distinguishing leaf nodes, which enables learning of topological ordering for data with both types of relations; 2) A new criterion, parent score is introduced to reflects the strength of the average causal effect of a given parent. Utilizing this criterion, CaPS designs pre-pruning and edge supplement operations to speed up the pruning process and rectify inaccurate predictions in the pruning step. 3) Extensive experiments are conducted to compare eight state-of-the-art baselines on synthetic and real-world data, showing competitive results. Furthermore, various analysis simulations demonstrate the effectiveness of our proposed designs.

\section{Related Work}
% This section focuses on causal discovery for SEMs and ordering-based causal discovery.
% Based on whether the causal functions assumed in SEMs are linear causal discovery studies can be broadly classified into the following two main streams:

\noindent\textbf{Causal discovery for SEMs.} 
Gaining knowledge of DAGs from observational data often necessitates additional suppositions about the distributions and/or relations. This paper discusses SEM-based studies from either the linear or nonlinear point of view.

% Most research can generally be \textcolor{red}{divided into either linear or nonlinear point of view\cite{XXXX}}.

To ensure the identifiability of linear causal relations, additional noise constraints must be made, for example, Gaussian noise with equal variance \cite{peters2014EqualVariances} or conditional variance \cite{ghoshal2018LISTEN}. Earlier work by LiNGAM \cite{shimizu2006NonGaussian} demonstrated the identifiability of linear causal relations with non-Gaussian noise and proposed an ICA-based method to identify this SEM. 
% The additive noise model (ANM) ~\cite{peters2014ANM} explores the asymmetries in the data from relations and/or noises.  
% Identification of linear causal relations is based on noise. 
NOTEARS \cite{zheng2018NOTEARS} proposed a novel continuous optimization approach based on the trace exponential function, which provides a new idea of continuous optimization for causal discovery. GOLEM further \cite{ng2020GOLEM} proposed a continuous likelihood-based method with soft sparsity and DAG constraints, which improved the performance of linear causal discovery. The optimization of these works is largely dependent on the linear parameterization of the weighted adjacency matrix. 
% Under some hypothesis on the noise variances, the additive noise model (ANM) ~\cite{peters2014ANM} is identifiable. 
% Another class of methods that are effective for linear causal discovery is the use of precision matrices, which has been proposed by \citet{loh2014linear}. LISTEN \cite{ghoshal2018LISTEN} also demonstrated that precision matrices can be used to discriminate the leaf node and estimate the weights of linear causal relations. 
% There are two prominent types of nonlinear SEM. The first is the nonlinear additive noise model (ANM) ~\cite{peters2014ANM}, which is known to be identifiable with arbitrary additive noise, except in some rare cases. The second is the generalized linear model (GLM), a nonlinear extension of the linear SEM that does not ensure causal identifiability.

For nonlinear SEMs, e.g., the nonlinear additive noise model (ANM) ~\cite{peters2014ANM} is known to be identifiable with arbitrary additive noise, except in some rare cases. To discover causal relations under nonlinear SEM, many previous works \cite{yu2019DAG_GNN,ng2019GAE,zheng2020NOTEARS_MLP, bello2022dagma, deng2023topo} proposed different DAG learning methods with a continuous acyclicity constraint. These works generally require a training model with an augmented Lagrangian approach which results in considerable computation cost. 
% However, it is difficult to select a threshold working on different datasets.
% Traditionally, score-based combinatorial methods optimize a score function over the set of DAGs, while constraint-based methods use statistical tests and directed rules to construct causal relations. However, the optimization problem is NP-hard, with the number of DAGs increasing super-exponentially with the number of variables. Continuous optimization algorithms in DAGs learning emerge.

\noindent\textbf{Ordering-based causal discovery} can partially avoid the aforementioned problems, as the order space is much smaller than the DAG space. CAM \cite{buhlmann2014cam} is an early ordering-based approach that uses a greedy search to estimate the topological ordering and significance tests to prune the DAG. LISTEN \cite{ghoshal2018LISTEN} proposed a new method for distinguishing leaf nodes of linear causal relations, based on the precision matrix \cite{loh2014linear}. However, these methods are based on the connection between the precision matrix and the weight matrix, which is not a unified criterion for linear and nonlinear SEMs.
% The score (the Jacobian of log-likelihood of data, $\nabla\log p(x)$) can be estimated by Stein identity in SCORE \cite{rolland2022score} or a diffusion model in DiffAN \cite{sanchez2023diffusion}. The leaf nodes are then determined by the score to find the topological ordering. 
SCORE \cite{rolland2022score} estimated the score (the Jacobian of logarithmic probability of data, $\nabla\log p(x)$) using the second-order Stein gradient estimator, and then determined the leaf nodes based on the score to find the topological ordering. DiffAN \cite{sanchez2023diffusion} was proposed to estimate the score via a diffusion model. 
% However, this paper points out that it is still not a unified pattern and fails to capture linear causal relations. 
% Moreover, they also encounter the same threshold selection problem as mentioned before.

Several recent works merge the two-stage process into one step by end-to-end differentiable optimizations to address the issue of error propagation, e.g VI-DP-DAG~\cite{charpentier2022VIDPDAG}  with variational inference and DAGuerreotype~\cite{zantedeschi2023DAGuerreotype} over the polytope. DAGuerreotype provides two variants specifically designed for linear/nonlinear relations. However, none of the above works provides a uniform solution that can identify DAG in datasets with both linear and nonlinear relations.

% and has linear and nonlinear variants. These methods  

% NOTEARS (***) converts the traditional score-based combinatorial optimization into a continuous optimization using a smooth score function to enforce the acyclicity constraint. 

\section{Preliminaries}

\subsection{Structural Equation Model}
The structural equation model for the causal discovery can be represented as : for random variable $\mathbf{x}=\{x_1, x_2, ...,  x_d\}\in\mathbb{R}^d$ sampling from the real joint probability distribution $p(\mathbf{x})$, we want to find a faithful causal graph $\mathcal{G}$ to represent the causal relationships between variables of different dimensions. 
% Previous work has demonstrated that two types of SEMs are identifiable in additive noise models (ANM): (i) nonlinear causal relations with arbitrary noises \cite{peters2014ANM}. (ii) linear causal relations with Gaussian noises which have equal variances \cite{peters2014EqualVariances}. We combine these two types of identifiable SEMs to identify both linear and nonlinear causal relations. 
 The SEM is defined with equation~(\ref{equ:sem}):
\begin{equation}
    x_i=f_i(pa_i(x))+\epsilon_i
\label{equ:sem}
\end{equation}
where $x_i\in\mathbf{x}$, $i=1,2,...,d$. $pa_i(x)$ denotes the parents of $x_i$, $f_i$ denotes the causal function, and $\epsilon_i$ denotes the additive noise of $x_i$. For each $x_i$, the parents and the noise are independent of each other, $pa_i(x)\perp\!\!\!\perp \epsilon_i$, and there is no unobservable confounder. 
Each causal function $f_i$ can be linear and nonlinear, and the additive noise $\epsilon_i\sim\mathcal{N}(0, \sigma_i^2)$ is Gaussian. These are the basic assumptions of ANM \cite{peters2014ANM}, and we relax the linear and nonlinear conditions in our SEM.

% To guarantee the identifiability of this SEM under linear and nonlinear causal relations, it has been demonstrated that some extra assumptions are necessary. When the causal relations are purely nonlinear, Peters et al.~\cite{peters2014ANM} have shown that this SEM is identifiable, except in rare cases. When causal relations are purely linear, Peters and B{\"u}hlmann~\cite{peters2014EqualVariances} have shown that this SEM is identifiable if noise has equal variances. Thus, the discussion in this paper is based on causal discovery under noise with equal variance, i.e. $\epsilon_i\sim\mathcal{N}(0, \sigma^2)$. 

% \textcolor{red}{This condition have been proved sufficient for identifiability in \citet{park2020ConditionsalVarience} when the causal relations are a combination of linear and nonlinear.}

% However, it has been proved that the causal relationships are unidentifiable when the causal function is linear and the additive noise is arbitrary Gaussian noise in ANM. Fortunately, if the Gaussian noises have equal variances, the linear Gaussian causality are again identifiable \cite{peters2014EqualVariances}. Thus, we make an additional assumption:\\
% \textbf{Assumption 1.} \textbf{[Equal Variances]} For each $x_i$, we assume that the additive noise has the same variances, i.e. $\epsilon_i\sim\mathcal{N}(0, \sigma^2)$.

\subsection{Topological Ordering}
Finding topological ordering is an important subtask for ordering-based causal discovery, which can reduce the DAG search space. 

\noindent\textbf{Definition.} Since the topological ordering of the causal graph $\mathcal{G}$ may not be unique, we define a set of order permutations $\Pi$ to represent all valid topological orderings. For any order permutation $\pi\in\Pi$, a parent node must always be before a child node, i.e. $\pi(i)<\pi(j)$ if $x_j$ is a descendant of $x_i$ on $\mathcal{G}$. The corresponding initialized adjacency matrix $\mathcal{A}$ should have $\mathcal{A}_{i,j}=1$ and $\mathcal{A}_{j,i}=0$.

\noindent\textbf{Estimation.}
To identify the leaf nodes of a DAG, SCORE suggests using the score $s_j(x)$, which is equal to the logarithmic gradient of the joint probability distribution $p(\mathbf{x})$ with respect to $x_j$. If a Markov chain is used to represent $p(\mathbf{x})$, then the score of each variable is equivalent to the logarithm gradient of the joint probability distribution.
\begin{equation}
    \begin{aligned}
        s_j(x)&=\nabla_{x_j}\prod_{i=1}^dp(x_i|pa_i(x)) =\nabla_{x_j}\sum_{i=1}^d\log p(x_i|pa_i(x)) \\
        &\overset{(i)}{=}\nabla_{x_j}[-\frac{1}{2}\sum_{i=1}^d(\frac{x_i\!-\!f_i(pa_i(x))}{\sigma_i})^2-\frac{1}{2}\sum_{i=1}^d\log (2\pi\sigma_i^2)] \\
        &=-\frac{x_j\!-\!f_j(pa_j(x))}{\sigma_j^2}\!+\!\!\!\sum_{i\in ch(j)}\frac{\partial f_i}{\partial x_j}(pa_i(x))\frac{x_i\!-\!f_i(pa_i(x))}{\sigma_i^2}
    \end{aligned}
    \label{equ:distrubtion}
\end{equation}
where $(i)$ uses the \textit{change of variables theorem} with $x_i-f_i(pa_i(x))=\epsilon_i$ and $ch(j)$ denotes the children of $x_j$ in the causal graph $\mathcal{G}$. According to Eq.\ref{equ:distrubtion}, for each leaf node, it is easy to see that $\frac{\partial s_j(x)}{\partial x_j}=-\frac{1}{\sigma_j^2}$ is a constant. Thus, given the nonlinear assumption of the causal function $f_i$, SCORE has shown that $x_j$ is a leaf node iff the variance of $\frac{\partial s_j(x)}{\partial x_j}$ is zero. The score $s_j(x)$ and each element on the diagonal of the score's Jacobian $\frac{\partial s_j(x)}{\partial x_j}$ can be estimated using the second-order Stein gradient estimator \cite{li2018stein} or the diffusion model \cite{sanchez2023diffusion}.

\section{Causal Discovery with Parent Score}
This section first examines the limits of current baselines under non-preassumed relations. Then, the design of CaPS in both topological ordering and post-processing is introduced.

 % and introduce a new criterion to differentiate leaf nodes in linear and nonlinear causal relationships.
% We further investigate the Jacobian of data log-likelihood and define the parent's score to approximate the strength of the average causal effect caused by each node's parents. We then propose pre-pruning and edge supplement to improve the post-processing of ordering-based methods.

\subsection{Leaf Nodes Discrimination}
Previous attempts to order nodes according to a certain criterion were unsuccessful due to the lack of a unified standard to distinguish leaf nodes in datasets with mix relations. To make this point more evident, we provide examples of both linear and nonlinear cases.
LISTEN \cite{ghoshal2018LISTEN} for linear causal relations uses the minimum value of the precision matrix's diagonal to distinguish leaf nodes. However, this approach fails because the connection between the precision matrix and the true causal graph no longer holds under nonlinear causal relations, which is detailed in Appendix~\ref{sec:listen}.

SCORE for nonlinear causal relations cannot differentiate leaf nodes in linear causal relations. To illustrate this, consider a simple linear causal case $x_i=\sum_{x_k\in pa_i(x)}w_{i, k}x_k+\epsilon_i$, where $f_i$ is linear in ANM. In this linear SEM, $\frac{\partial f_i}{\partial x_j}(pa_i(x))=w_{i, j}$ is a constant and $\frac{\partial^2 f_i}{\partial x_j^2}(pa_i(x))=0$. Consequently, for any node, the value of each element on the diagonal of the score's Jacobian is always constant, i.e., $\frac{\partial s_j(x)}{\partial x_j}=-\frac{1}{\sigma_j^2}-\sum_{i\in ch(j)}\frac{w_{i,j}^2}{\sigma_i^2}$, making SCORE unable to differentiate leaf nodes. To address this issue, we propose a new discriminant criterion in Theorem 1 effective in both linear and nonlinear causal relations and give its sufficient conditions for identifiability in Assumption 1.
% SCORE $\mathcal{G}$ is a leaf or not by using $\frac{\partial s_j(x)}{\partial x_j}$. The expression of $\frac{\partial s_j(x)}{\partial x_j}$ for arbitrary nodes is derived as follows:
% \begin{equation}
%     \begin{aligned}
%         \frac{\partial s_j(x)}{\partial x_j}&=-\frac{1}{\sigma^2}+\sum_{i\in ch(j)}[\frac{\partial^2 f_i}{\partial x_j^2}(pa_i(x))\frac{x_i-f_i(pa_i(x))}{\sigma^2} \\
%         &-\frac{1}{\sigma^2}(\frac{\partial f_i}{\partial x_j}(pa_i(x)))^2] \\
%         &=-\frac{1}{\sigma^2}-\sum_{i\in ch(j)}\frac{1}{\sigma^2}(\frac{\partial f_i}{\partial x_j}(pa_i(x)))^2 \\
%         &+\sum_{i\in ch(j)}\frac{\partial^2 f_i}{\partial x_j^2}(pa_i(x))\cdot\frac{x_i-f_i(pa_i(x))}{\sigma^2}
%     \end{aligned}
% \end{equation}

% \begin{equation}
%     \frac{\partial s_j(x)}{\partial x_j}=-\frac{1}{\sigma^2}-\sum_{i\in ch(j)}\frac{w_{i,j}^2}{\sigma^2}
% \end{equation}

% \noindent\textbf{Theorem 1.} \emph{A node $x_j$ in causal graph $\mathcal{G}$ is a leaf node iff $j=\arg\max(\text{diag}(\mathbb{E}[\frac{\partial s(x)}{\partial x}]))$, where $\text{diag}(\cdot)$ denotes the diagonal elements of the matrix.}
\begin{assumption}
    \textbf{(Sufficient conditions for identifiability).} The topological ordering of a causal graph is identifiable if \textbf{one} of the following sufficient conditions is satisfied.
    
    % \item[(i)] \textbf{Non-decreasing variance of noises.} For any two noises $\epsilon_i$ and $\epsilon_j$ of different nodes, $\sigma_j \geq \sigma_i$ if $\pi(i)<\pi(j)$.
    \textit{(i)} \textbf{Non-decreasing variance of noises.} For any two noises $\epsilon_i$ and $\epsilon_j$, $\sigma_j \geq \sigma_i$ if $\pi(i)<\pi(j)$.
    
    \textit{(ii)} \textbf{Non-weak causal effect.} For any non-leaf nodes $x_j$, $\sum_{i\in Ch(j)}\frac{1}{\sigma_i^2}\mathbb{E}[(\frac{\partial f_i}{\partial x_j}(pa_i(x)))^2] \geq \frac{1}{\sigma^2_\text{min}} - \frac{1}{\sigma^2_j}$.
    
\end{assumption}
\noindent where $\sigma_\text{min}$ is the minimum variance for all noises. Assumption 1 gives two conditions for identifiability which no longer depends on the linearity or nonlinearity of the causal function and relaxes previous identifiability conditions. Condition $(i)$ is an extension of the equal variance assumption \cite{peters2014EqualVariances,ghoshal2018LISTEN}. Condition $(ii)$ is a new sufficient condition, which first quantitatively gives a lower bound of identifiable causal effects. It enables CaPS to identify causal relations even in scenarios outside of non-decreasing variance. For example, considering a variance-unsortable scenario with $\sigma^2\sim U(0.1, 1)$ and causal effect greater than 3, CaPS can also work well because the the sum of parent score is greater than the given lower bound in condition$(ii)$.
The meaning of condition $(ii)$ will be further discussed in section \ref{sec:ParentScore}. 
Under conditions $(i)$ or $(ii)$, the topological ordering can be identified by Theorem 1.

\begin{theorem}
    Let $s(x) = \nabla \log p(x)$ be the score and let $\text{diag}(\cdot)$ be the diagonal elements of the matrix.
    For any $x_j$ in the causal graph $\mathcal{G}$:

    \begin{center}
        $j=\arg\max(\text{diag}(\mathbb{E}[\frac{\partial s(x)}{\partial x}])) \Rightarrow x_j$ is a leaf node
    \end{center}

\end{theorem}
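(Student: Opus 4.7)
The plan is to derive a closed form for the diagonal entry $\mathbb{E}[\partial s_j(x)/\partial x_j]$ in terms of $\sigma_j$ and the expected squared partial derivatives of the structural functions, and then check that under either sufficient condition of Assumption~1 this quantity is maximized only at a leaf. First I would differentiate the expression for $s_j(x)$ in equation~(\ref{equ:distrubtion}) with respect to $x_j$: the self-term contributes $-1/\sigma_j^2$ (since $f_j(pa_j(x))$ is free of $x_j$), and for each child $i \in ch(j)$ the product rule combined with $x_i - f_i(pa_i(x)) = \epsilon_i$ produces
$$\frac{\partial^2 f_i}{\partial x_j^2}(pa_i(x))\cdot\frac{\epsilon_i}{\sigma_i^2} \;-\; \frac{1}{\sigma_i^2}\Bigl(\frac{\partial f_i}{\partial x_j}(pa_i(x))\Bigr)^2.$$

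Second, I would take expectations. The ANM independence $\epsilon_i \perp\!\!\!\perp pa_i(x)$ together with $\mathbb{E}[\epsilon_i]=0$ annihilates the cross term involving the Hessian of $f_i$, yielding the clean identity
$$\mathbb{E}\Bigl[\frac{\partial s_j(x)}{\partial x_j}\Bigr] \;=\; -\frac{1}{\sigma_j^2} \;-\; \sum_{i\in ch(j)}\frac{1}{\sigma_i^2}\,\mathbb{E}\Bigl[\Bigl(\frac{\partial f_i}{\partial x_j}(pa_i(x))\Bigr)^2\Bigr].$$
For a leaf node the sum is empty and the diagonal value equals $-1/\sigma_j^2$ exactly; for any non-leaf, an additional non-negative correction — strictly positive whenever the edge $x_j \to x_i$ is a genuine functional dependency — is subtracted.

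Third, I would compare leaves and non-leaves under each sufficient condition. Under condition~(i), leaves sit at the bottom of every topological ordering and therefore carry the largest noise variances, so the bare $-1/\sigma_j^2$ is already maximized at a leaf; combined with the strictly negative correction for any non-leaf, the argmax must be a leaf. Under condition~(ii), the assumed lower bound $\sum_{i\in ch(j)}\sigma_i^{-2}\mathbb{E}[(\partial f_i/\partial x_j)^2] \ge \sigma_{\min}^{-2} - \sigma_j^{-2}$ forces each non-leaf's diagonal value to be at most $-1/\sigma_{\min}^2$, while every leaf value $-1/\sigma_\ell^2$ is at least $-1/\sigma_{\min}^2$; so once again the argmax is realized at a leaf.

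The main obstacle I anticipate is not algebraic but the tie-breaking under condition~(ii): if the node attaining $\sigma_{\min}$ happens to be a leaf, the leaf bound and the non-leaf bound can both equal $-1/\sigma_{\min}^2$, and strict dominance of leaves is not automatic. I would resolve this either by tightening condition~(ii) to a strict inequality for non-leaves (which is the informative regime, since an exactly matched causal effect contributes nothing to identification) or by noting that condition~(ii) is only genuinely needed when $\sigma_{\min}$ is attained at a non-leaf. A minor secondary issue is the usual regularity needed to swap expectation and the $x_j$-derivatives of $f_i$, which I would record as a standing smoothness assumption rather than belabor.
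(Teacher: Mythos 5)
Your proposal is correct and follows essentially the same route as the paper's own proof: the same closed-form expression $\mathbb{E}[\partial s_j(x)/\partial x_j] = -1/\sigma_j^2 - \sum_{i\in ch(j)}\sigma_i^{-2}\,\mathbb{E}[(\partial f_i/\partial x_j)^2]$ obtained by killing the Hessian cross-term via $\epsilon_i \perp\!\!\!\perp pa_i(x)$ and $\mathbb{E}[\epsilon_i]=0$, followed by the identical leaf-versus-non-leaf comparison under each of the two sufficient conditions. The tie-breaking subtlety you flag under condition~(ii) is real but is equally present in the paper's argument, which likewise only establishes the non-strict inequality $\mathbb{E}[\partial s_l(x)/\partial x_l] \geq \mathbb{E}[\partial s_n(x)/\partial x_n]$.
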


\begin{proof}
    (Simplified version; details are in Appendix~\ref{sec:theorem1_proof}.)
    
    \noindent For an arbitrary node $x_j$ in the causal graph $\mathcal{G}$, we focus on the diagonal of the score's Jacobian.
    \begin{equation}
        \begin{aligned}
            \frac{\partial s_j(x)}{\partial x_j}&=-\frac{1}{\sigma_j^2}-\sum_{i\in ch(j)}\frac{1}{\sigma_i^2}(\frac{\partial f_i}{\partial x_j}(pa_i(x)))^2+\sum_{i\in ch(j)}\frac{\partial^2 f_i}{\partial x_j^2}(pa_i(x))\cdot\frac{x_i-f_i(pa_i(x))}{\sigma_i^2}
        \end{aligned}
    \end{equation}
    Since $\frac{x_i-f_i(pa_i(x))}{\sigma_i^2}=\frac{\epsilon_i}{\sigma_i^2}\sim\mathcal{N}(0, \frac{1}{\sigma_i^2})$ and $pa_i(x)\perp\!\!\!\perp \epsilon_i$ in our SEM, the expectation of $\frac{\partial s_j(x)}{\partial x_j}$ can be restated as:
    \begin{equation}
        \label{equ:exp_score_jaco}
        \mathbb{E}[\frac{\partial s_j(x)}{\partial x_j}]=-\frac{1}{\sigma_j^2}-\sum_{i\in Ch(j)}\frac{1}{\sigma_i^2}\mathbb{E}[(\frac{\partial f_i}{\partial x_j}(pa_i(x)))^2]
    \end{equation}
    Suppose that $x_l$ is a leaf node and $x_n$ is a non-leaf node, we have $\mathbb{E}[\frac{\partial s_l(x)}{\partial x_l}]=-\frac{1}{\sigma_l^2}$.
    Then, $\mathbb{E}[\frac{\partial s_l(x)}{\partial x_l}]\geq \mathbb{E}[\frac{\partial s_n(x)}{\partial x_n}]$ always holds under conditions $(i)$ or $(ii)$. Thus, the node in $\arg\max(\text{diag}(\mathbb{E}[\frac{\partial s(x)}{\partial x}]))$ will always be the leaf node.
    \vspace{-0.1in}
\end{proof}
\vspace{-0.15in}
Theorem 1 suggests that the expectation of the diagonal of the score's Jacobian can be used to identify leaf nodes. Thus, applying Theorem 1, the topological ordering is identifiable by iteratively eliminating the current leaf node \cite{kahn1962topological}. 
% The identified permutation $\pi$ is one of legal topological orderings, i.e., $\pi\in\Pi$. 
The detailed procedure is included in Algorithm \ref{alg:algorithm1}, where we use the second-order Stein gradient estimator to estimate the score's Jacobian $\frac{\partial s(x)}{\partial x}$.

% \noindent\textbf{Theorem 1.} The topological ordering is identifiable by iteratively using Theorem 1. Thus, the identified permutations $\pi^*\in\Pi$.

\subsection{Parent Score}
\label{sec:ParentScore}
Theorem 1 specifies how to identify the correct topological ordering of the graph $\mathcal{G}$. However, it is not straightforward to determine the parents of each node. To identify the true causal graph $\mathcal{G}$, we need information beyond permutation to guide the selection of parents. Thus, we need a metric that can qualitatively express the causual effects from a parent node to one of its children. Here,  a new metric, "Parent Score" is proposed to approximate the average causal effect.

% further design and extract the parent score from score's Jacobian, which is used to represent the effect from a parent node to one of its children. This new metric can approximate the average causal effect in both linear and nonlinear causal relations.
% Previous works, the selection of parents has been treated as a seperated state, with pruning on the fully connected graphs based on permutation \cite{rolland2022score, sanchez2023diffusion}. However, permutation can indicate the direction of causal relations, but not the strength of them. In this section, we would like to find a unified criterion in score's Jacobia that provides information on the strength of causal effects and guides post-processing in both linear and nonlinear causal relations.

\noindent\textbf{Definition of parent score.} We define Eq.\ref{equ:pij} to express the parent score $\mathcal{P}_{i,j}$ which approximates the strength of the average causal effect of all samples from $x_j$ to $x_i$.
\begin{equation}
    \mathcal{P}_{i,j}=
    \begin{cases}
        \frac{1}{\sigma_i^2}\mathbb{E}[(\frac{\partial f_i}{\partial x_j}(pa_i(x)))^2], & x_j\in pa_i(x) \\
        0, x_j\notin pa_i(x)
    \end{cases}
    \label{equ:pij}
\end{equation}
where $\mathcal{P}_{i,j} = 0$ if $x_j$ is not a parent of $x_i$, and $\mathcal{P}_{i,j}>0$ if $x_j$ is a parent of $x_i$. 
 
 To illustrate the meaning of this definition, we propose a new metric of the Squared Average Treatment Effect (SATE) extended from Average Treatment Effect (ATE, $\mathbb{E}[Y^{(T=1)} - Y^{(T=0)}]$). In the task of estimating causal effects \cite{guo2020causal_effect}, ATE is often used to measure the average effect of a treatment or intervention on an outcome variable. Since we focus only on the strength of the effect rather than on the positive or negative effect, SATE is defined as follows:
\begin{equation}
    \text{SATE}_i^j=\mathbb{E}[(x_i^{(T_j=1)}-x_i^{(T_j=0)})^2]
    \label{equ:gce1}
\end{equation}
where $T_j=1$ and $T_j=0$ indicate whether $x_j$ are treated or not. With a small additive treatment, we show that SATE from $x_j$ to $x_i$ can be approximated by $\mathbb{E}[(\frac{\partial f_i}{\partial x_j}(pa_i(x)))^2]$. Thus, the parent score $\mathcal{P}_{i,j}$ is the causal effect of the parent scaled by its variance of noise. The detailed derivation is shown in Appendix~\ref{sec:SATE_details}.
\noindent\textbf{Computing parent score.} Parent score cannot be obtained directly from the summation of average causal effects on childrens in Eq.\ref{equ:exp_score_jaco}, thus we propose an iterative decoupling process and define
% Due to the summation of average causal effects of a node on its childrens in Eq.5, we can not obtain parent score directly from Eq.5. To address this problem, we propose an iterative decoupling process to compute parent score. Concretely speaking, we define 
\begin{equation}
\label{equ:J}
\mathcal{J}=\{\mathbb{E}[\frac{\partial s_1(x)}{\partial x_1}], \mathbb{E}[\frac{\partial s_2(x)}{\partial x_2}],..., \mathbb{E}
[\frac{\partial s_d(x)}{\partial x_d}]\}
\end{equation}
to denote the expectation of the diagonal of the score's Jacobian. By removing the node $x_i$, a new vector $\mathcal{J}_{-i}$ is defined as follows:
\begin{equation}
    \label{equ:J_i}
    \begin{aligned}
        \mathcal{J}_{-i}=&\{\mathbb{E}[\frac{\partial s_1(x_{-i})}{\partial x_1}], \mathbb{E}[\frac{\partial s_2(x_{-i})}{\partial x_2}],...,\mathbb{E}[\frac{\partial s_i(x)}{\partial x_i}], ..., \mathbb{E}[\frac{\partial s_d(x_{-i})}{\partial x_d}]\}
    \end{aligned}
\end{equation}
where $x_{-i}$ represents the remaining data after removing the feature of $i$-th dimension. For the $i$-th element of $\mathcal{J}_{-i}$, we fill the $i$-th element of $\mathcal{J}$. Then, each row vector of the matrix of parent score $\mathcal{P}\in\mathbb{R}^{d\times d}$ is equivalent to:
\begin{equation}
    \label{equ:P_i}
    \mathcal{P}_{i,:}=\mathcal{J}_{-i}-\mathcal{J}
\end{equation}
The complete $\mathcal{P}$ can be obtained by iteratively computing parent score of each row. The specific derivation of this procedure is given in Appendix~\ref{sec:parent_score_details}. The Algorithm \ref{alg:algorithm1} describes the process of finding the topological order and computing the parent score, where $X_{-i}$ denotes the data matrix with the $i$-th feature removed. Similarly, $X_{-r}$ denotes the data matrix with a set of removed features.

\begin{algorithm}[h]
\caption{Ordering and Computing parent score}
\label{alg:algorithm1}
\textbf{Input}: data matirx $X\in\mathbb{R}^{n\times d}$ \\
\textbf{Output}:$\text{ permutation }\pi\text{, parent score }\mathcal{P}$

\begin{algorithmic}[1] %[1] enables line numbers
\STATE initialize $\pi\leftarrow[\,]$, $\text{removed set } r \leftarrow[\,], $ nodes $\leftarrow[1, 2, ..., d]$, $\mathcal{P}\leftarrow \mathbf{0}$
\STATE estimate $\frac{\partial s(X)}{\partial x}$ and then obtain $\mathcal{J}$ using Eq.\ref{equ:J}
\FOR{$i=1, 2, ..., d$}
\STATE estimate $\frac{\partial s(X_{-i})}{\partial x}$ and $\frac{\partial s(X_{-r})}{\partial x}$ 
\STATE obtain $\mathcal{J}_{-i}$ using Eq.\ref{equ:J_i} and then compute parent score $\mathcal{P}_{i,:}\leftarrow\mathcal{J}_{-i}-\mathcal{J}$ using Eq.\ref{equ:P_i}
\STATE distinguish current leaf node $l \leftarrow \text{nodes}[\arg\max\text{diag}(\mathbb{E}[\frac{\partial s(X_{-r})}{\partial x}])]$ using Theorem.1
\STATE update $\pi\leftarrow [l, \pi]$, $r\leftarrow r+\{l\}$, nodes$\leftarrow\text{nodes}-\{l\}$
\ENDFOR
\STATE \textbf{return} $\pi$, $\mathcal{P}$
\end{algorithmic}
\end{algorithm}

\noindent\textbf{Association with leaf nodes discrimination.} We can revisit the criterion to distinguish leaf nodes with parent score. According to the definition of parent score, given the node $x_j$, $\sum_{i=0}^d\mathcal{P}_{i, j}$ can be considered as the total causal effect to its children. The sufficient condition $(ii)$ for identifiability can be restated as $\sum_{i=0}^d\mathcal{P}_{i, j} \geq \frac{1}{\sigma_\text{min}} - \frac{1}{\sigma_j}$ when $x_j$ is not a leaf. It means that the causal relations can be identified if the causal effect stronger than the given lower bound, which gives a quantitative interpretation for an intuitive conclusion. Under this sufficient condition, the meaning of Theorem 1 can be further explained in the following corollary.
% Theorem 1 states that finding the leaf nodes is equivalent to finding the maximum value of the expectation of the diagonal of the score’s Jacobian. Since the expectation of the partial derivative of $s_j(x)$ with respect to $x_j$ is proportional to the negative sum of $\mathcal{P}_{i,j}$ for all $i$ in the set of children of $j$, and $\mathcal{P}_{i,j}$ is zero if $x_j$ is not a parent of $x_i$, we can use the parent score to discriminate the leaf nodes, as stated in the following corollary.

\noindent\textbf{Corollary 1.} \emph{$j=\arg\max(\text{diag}(\mathbb{E}[\frac{\partial s(x)}{\partial x}])) \Rightarrow $ $x_j$'s sum of parent score $\sum_{i=0}^d\mathcal{P}_{i, j}\text{ is minimal} \Rightarrow x_j$ is a leaf node }

% \emph{A node $x_j$ in causal graph $\mathcal{G}$ is a leaf node $\Leftrightarrow j=\arg\min(\sum_{i=0}^d\mathcal{P}_{i, :})$.}

% \noindent\textbf{Corollary 1.} \emph{For all $x_j$ in causal graph $\mathcal{G}$:}

The detailed proof is given in the Appendix~\ref{sec:cor1_proof}. Corollary 1 shows that Theorem 1 is actually finding the minimal sum of the parent score. Then, the node with the minimal total causal effect is a leaf node. This corollary implies the association between parent score and Theorem 1. Thus, the parent score can be considered as a unified metric during topological ordering and post-processing.

\subsection{Pre-pruning and Edge Supplement}
% With Theorem 1, we can obtain the topological ordering by iteratively identifying the leaf nodes, and generate the corresponding initial graph based on the topological ordering. 
% Theoretically, parent score in Eq.\ref{equ:pij} can be used to select the correct parents. However, due to the bias in the estimation of the score's Jacobian and the difficulty in selecting a threshold for pruning, direct application of parent score for post-processing maybe ineffective. 
Previous research has demonstrated that redundant edges can be successfully eliminated through CAM pruning, which applies significance testing of covariates based on generalized additive models. This technique is widely used in many strong baselines \cite{rolland2022score, sanchez2023diffusion, lachapelle2020gradient}.
% thus satisfying the unified linear and nonlinear framework. 
However, CAM pruning is time-consuming and only utilizes the topological ordering information. The proposed metric, parent score, can further provide more information on causal effects. Thus, CaPS introduces the pre-pruning and edge supplement operations before/after CAM pruning process to accelerate pruning by removing edges with low parent score and restore removed edges with strong parent score.

% still has a high overhead for low-ranked nodes in topological ordering. Moreover, it

% , we often need to select a threshold for pruning, which is hard to determine in the implementation. Thus, we consider a combined post-processing strategy to effectively select the correct parents.

% more classical methods such as CAM appears to per- form better in practice.

% Previous research has demonstrated that redundant edges can be successfully eliminated through CAM pruning~\cite{buhlmann2014cam}. This technique applies significance testing of covariates based on generalized additive models, thus satisfying the unified linear and nonlinear framework. However, CAM pruning still has a high overhead for low-ranked nodes in topological ordering. Moreover, it only utilizes the topological ordering information, but loses the information of causal effect and confidence. Therefore, the parent score is used for both pre-pruning and edge supplement process which are outlined below:

\noindent\textbf{Pre-pruning.} Before CAM pruning, we use low-confidence parents to pre-prune the initial graph, which can remove the low-confidence edges and reduce the searching space for CAM pruning. 
% In the process of pre-pruning, there is a high probability that the existing edges in the graph are spurious edges. Therefore, 
For each node, we use the maximum value of their parents to determine the threshold for pre-pruning. Specifically, for any $x_i, x_j \in \mathbf{x}$, we mask the adjacency matrix $\mathcal{A}_{j,i}=0$ if $\mathcal{P}_{i,j} < \frac{\max(\mathcal{P}_{i,:})}{\lambda}$, where $\lambda$ is a hyperparameter that represents the rigor in prepruning. This design can greatly speed up the pruning process, especially when the number of nodes is large (see Appendix~\ref{sec:larger_scale}).

% \noindent\textbf{CAM pruning.} This process applies significance testing of covariates based on generalized additive models, so that it also satisfies the unified linear and nonlinear framework. 

\noindent\textbf{Edge supplement.} After CAM pruning, we use high-confidence parents to supplement the edge, which can remedy errors in topological ordering and incorrect deletion in CAM pruning. With CAM pruning, the existing edges are likely to be the correct edges in a real causal graph. Thus, the parent score in current edges is used to automatically determine the threshold for edge supplement. For any $x_i, x_j \in \mathbf{x}$, we supplement the edge $\mathcal{A}_{j,i}=1$ when the following conditions are satisfied. First, $\mathcal{P}_{i,j} > \lambda\cdot\text{avg}(\mathcal{P}^\top\odot\mathcal{A})$, where $\odot$ denotes the Hadamard product, and avg$(\cdot)$ returns the average value of a matrix. Here, we use the same rigor $\lambda$ for pre-pruning and edge supplement. Second, $\mathcal{A}$ is acyclic after supplementing the current edge. Note that edges added later may potentially violate acyclicity, so we use a greedy strategy to prioritize adding the edge with a higher parent score. The pseudocode of post-processing are released in Appendix~\ref{sec:post_pseudocode}.

% The complete post-processing procedure is shown in Algorithm \ref{alg:algorithm2}, where $\mathcal{P}_\text{max}$ is the matrix broadcasted with the maximum value of each row of $\mathcal{P}$.
% \begin{algorithm}[h]
% \caption{Post-processing}
% \label{alg:algorithm2}
% \textbf{Input}: data matirx $X\in\mathbb{R}^{n\times d}$, rigor $\lambda$, permutation $\pi$, parent score $\mathcal{P}$\\
% \textbf{Output}: adjacency matrix $\mathcal{A}$

% \begin{algorithmic}[1] %[1] enables line numbers
% \STATE initialize $\mathcal{A}$ using $\pi$
% \STATE $\mathcal{A}\leftarrow \mathcal{A} \odot \text{int}(\mathcal{P} < \frac{\mathcal{P}_\text{max}}{\lambda})^{\top}$ // pre-pruning
% \STATE $\mathcal{A}\leftarrow$ CAM pruning($\mathcal{A}$, $X$)
% \STATE $\mathcal{E}\leftarrow \text{int}(\mathcal{P} > \lambda\cdot\text{avg}(\mathcal{P}^\top\odot\mathcal{A}))$
% \FOR{each edge (i, j) in $\mathcal{E}$ sorted by $\mathcal{P}_{i,j}$}
% \STATE $\mathcal{A}_{j,i}\leftarrow 1$ if $\mathcal{A}$ is still acyclic // edge supplement
% \ENDFOR
% \STATE \textbf{return} $\mathcal{A}$
% \end{algorithmic}
% \end{algorithm}

\subsection{Computational Complexity}
\label{sec:computational_complexity}
% The framework of CaPS can be divided into two stages. In the first stage, we find the topological ordering and compute the parent score from observational data. The full process of the first stage is illustrated in Algorithm \ref{alg:algorithm1}. Here, we use the method the same as SCORE \cite{rolland2022score} to estimate the score's Jacobian $\frac{\partial s(\cdot)}{\partial x}$, where $X_{-i}$ denotes the data matirx removing the i-th feature. Similarly, $X_{-r}$ denotes the data matrix with a set of features removed. In the second stage, we use parent score for post-processing. The post-processing is shown in Algorithm \ref{alg:algorithm2}, where $\mathcal{P}_\text{max}$ is the matrix consisting of the maximum value of each row of $\mathcal{P}$. 
For Algorithm~\ref{alg:algorithm1}, the computational complexity is mainly related to the estimation of score's Jacobian $d$ times, which is $\mathcal{O}(d\cdot n^3)$, with  $n$ for the number of samples and $d$ for the feature dimension. For post-processing, the computational complexity is $\mathcal{O}(d^2 + d\cdot r^{(n, m)} + s\cdot(d+e+s))$, which can be considered as two steps. For the pruning step, the computational complexity of the original CAM pruning is $\mathcal{O}(d\cdot r^{(n, d)})$, where $r^{(n, d)}$ is the complexity function of training a generalized additive model. 
% It should be emphasized that this training process is often the most time-consuming part of the entire post-processing. 
With pre-pruning, the computational complexity of pruning can be reduced to $\mathcal{O}(d^2 + d\cdot r^{(n, m)})$, where $m\le d$ is the maximum number of parents for each node. 
% This process accelerates the pruning step of nodes with lower topological ordering. 
For the edge supplement step, the computational complexity is $\mathcal{O}(s\cdot(d+e+s))$ due to acyclic testing, where $s$ denotes the number of candidate edges and $e<\frac{d\cdot(d-1)}{2}$ denotes the number of edges remaining after pruning in a DAG. Since we only want to supplement the edges with high confidence, $s$ tends to be a small value in the implementation. Despite this cubic complexity of $n$, the actual-time growth is close to linear since many CaPS operations are GPU-friendly, which is detailed in Appendix~\ref{sec:larger_scale}.
% where $m\le d$ is the maximum number of parents for each node, $s$ denotes the number of candidate edges and $e<\frac{d\cdot(d-1)}{2}$ denotes the number of edges remaining after pruning in a DAG. Since we only want to supplement the edges with high confidence, $s$ tends to be a small value in the implementation. 

\section{Experiments}

\subsection{Baselines and Settings}
\label{sec:baselines_and_settings}
\noindent\textbf{Baselines.} 
This paper benchmarks CaPS against eight strong state-of-the-art baselines designed for: 

\noindent\textit{Linear:}
NOTEARS \cite{zheng2018NOTEARS} and GOLEM \cite{ng2020GOLEM}, two strong linear methods with continuous optimization. 
% NOTEARS \cite{zheng2018NOTEARS} is the first continuous optimization technique that introduces acyclicity penalties and utilizes the Augmented Lagrangian method to optimize the goal. Another continuous optimization approach, GOLEM \cite{ng2020GOLEM}, optimizes a likelihood-based score in the linear Gaussian case, which is regularized by a soft acyclicity penalty.

\noindent\textit{Nonlinear:}
% DAG-GNN \cite{yu2019DAG_GNN}, a nonlinear extension of NOTEARS which uses an evidence lower bound as score; GAE \cite{ng2019GAE}, an autoencoder based method which further extends the NOTEARS and DAG-GNN to facilitate nonlinear  relations; 
GraNDAG \cite{lachapelle2020gradient} formulates neural network paths and a connectivity matrix, and substitutes them into the acyclicity penalty; 
% NOTEARS-MLP \cite{zheng2020NOTEARS_MLP} is a nonlinear extension of NOTEARS; CASTLE \cite{kyono2020castle} uses causal discovery as an auxiliary task to the prediction task;
Five ordering-based methods (CAM, VI-DP-DAG, SCORE, DiffAN, and DAGuerreotype) are chosen for comparison. DAGuerreotype has two versions: a linear one (DAGuerreotype-L) and a nonlinear one (DAGuerreotype-N). Both variants are evaluated in the synthetic data experiments to ensure a fair comparison.

\noindent\textbf{Metrics.}
Three metrics in causal discovery are adopted for evaluation: the structural Hamming distance (SHD), the structural intervention distance (SID) \cite{peters2015SID}, and the F1 score. SHD evaluates the number of edges that must be altered to make the estimated causal graph match the true causal graph. SID assesses the number of interventional distributions in the true causal graph that are disrupted in the estimated causal graph. Lower values for SHD and SID are desirable. SHD favors sparser estimated causal graphs, whereas SID favors denser estimated causal graphs. Therefore, doing well in only one of these two metrics does not necessarily mean effectiveness. F1 score measures the balance between precision and recall, with higher values indicating better performance.

\noindent\textbf{Settings.}
% The average values of our experiments were obtained over five trials. 
We used the settings from the respective papers for all the baselines. For some methods that have multiple versions, such as GOLEM and DAGuerreotype, we reported the results of the version that gave the best performance on the corresponding dataset. The only hyperparameter of CaPS was rigor $\lambda$, which we set to $\lambda=50$ for all datasets to avoid any dataset-specific tuning.  

\noindent\textbf{Datasets.} Synthetic data are created using the Erd\"{o}s-R\'enyi (ER) \cite{erdHos1960ER} or Scale-Free (SF) models\cite{barabasi1999SF} with different linear and nonlinear proportion. We set the number of nodes $d=10$ and the number of samples $n=2000$ by default, while $d=20,50$ and $n=1000, 5000$ are also given. Real dataset contains a protein expression dataset \textit{Sachs}~\cite{sachs2005causal} and a pseudoreal transport network dataset \textit{Syntern}~\cite{van2006syntren}. Details and more insights of the synthetic and the real data can be found in Appendix~\ref{sec:dataset_details}.

% Therefore, the performance of CaPS could be further improved by searching for better hyperparameters in different datasets or by setting different hyperparameters for pre-pruning and edge supplement.

\begin{figure*}[t]
    \centering
    \subfigure[SynER1 (a sparser graph)]{\includegraphics[width=0.9\linewidth]{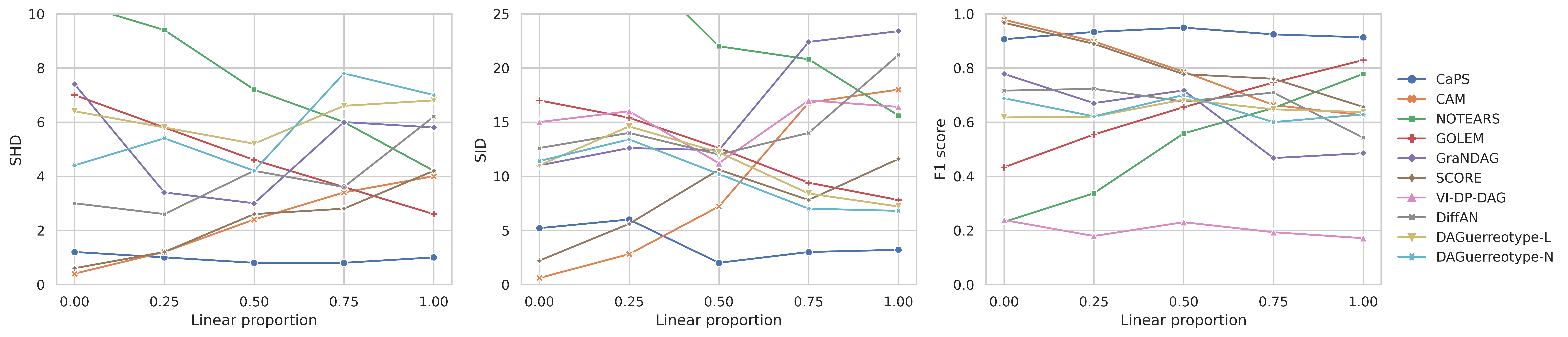}}
    \subfigure[SynER4 (a denser graph)] {\includegraphics[width=0.9\linewidth]{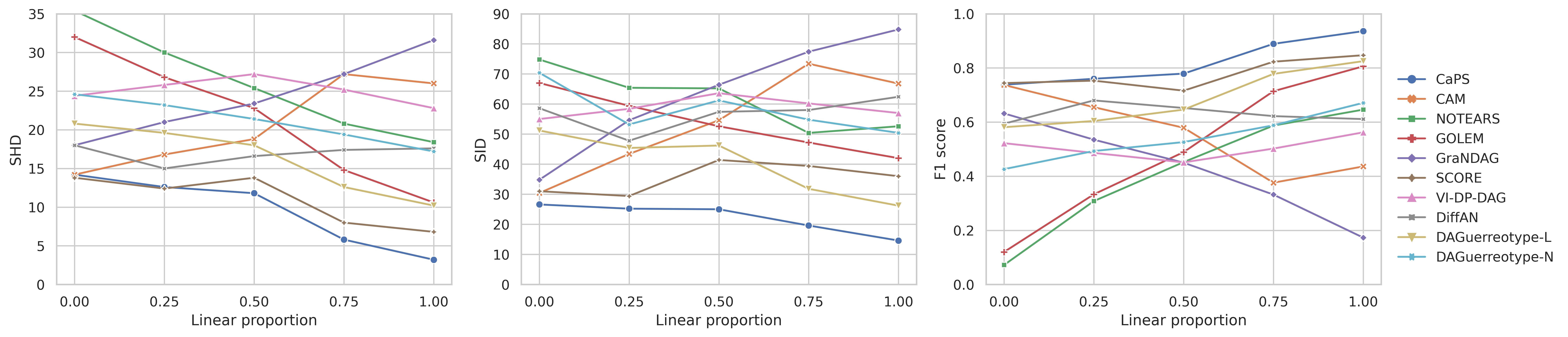}}
    \caption{Results of SynER1 and SynER4 with different linear proportions, where linear proportion equal to 0.0 means all relations are nonlinear and 1.0 means all relations are linear.}
    \label{fig:synER}
\end{figure*}

\subsection{Synthetic Data}
% Synthetic data are created using the Erd\"{o}s-R\'enyi (ER) \cite{erdHos1960ER} or Scale-Free (SF) models\cite{barabasi1999SF}. The degree of each node in the ER graph is relatively even, whereas some nodes in the SF graph may have very high degrees. For a fixed number of nodes $d$, the sparsity of the graph is altered by changing the average number of edges to either $d$ or four times $d$, referred to as SynER1 and SynER4. 
% Both graphs have 10 nodes. To make DAG structures more complex, the weight of each edge is randomly chosen from the range of $[-1, -0.1]\cup[0.1, 1]$.
% To create datasets with both linear and nonlinear relationships between variables, we set the noise distribution to be Gaussian with a mean of 0 and a variance of 1. To generate linear causal functions $f_i$, we use weighted causal graphs. For nonlinear causal functions $f_i$, we sample Gaussian processes with a unit bandwidth RBF kernel and multiply them by the weights in the causal graph, similar to SCORE and DiffAN. A collection of five datasets is created for SynER1 and SynER4, with a range of linear and nonlinear causal connections, from 0.0 (all nonlinear) to 1.0 (all linear) with step 0.25. Each dataset has 2000 samples.

Fig.\ref{fig:synER} shows the experiment results of eight baselines. We can observe that CaPS performs better for both sparser (SynER1) and denser (SynER4) graphs in almost all ranges, especially when the linear proportions are greater than 0.25. 
% For linear proportions of 0.0 and 0.25, our method performs similarly to the best methods SCORE and CAM. 
We also note that GOLEM's performance decreases with increasing nonlinear proportion, and SCORE's performance decreases with increasing linear proportion, which could be due to their assumptions not being met. In contrast, CaPS performs consistently well in almost all ratios. Experiments with the SynSF1 and SynSF4 datasets show similar results, and further information can be found in Appendix~\ref{sec:SynSF1_4}.

\subsection{Real Data}
% Two real datasets are adopted. Sachs~\cite{sachs2005causal}, a protein signaling network based on protein expression levels and phospholipids, consists of 11 nodes, 853 observations, and 17 edges from ground-truth causal graph; Syntren \cite{van2006syntren}, a pseudoreal data set sampled from the Syntren generator, consists of 10 transcriptional networks, each consisting of 500 observations and a DAG consisting of $d=20$ nodes and edges with $e\in\{20,...,25\}$. 
The results of the real data are presented in Table~\ref{real-word}. CaPS achieves the highest SHD and F1 scores on Sachs, with SID coming second to VI-DP-DAG. VI-DP-DAG had the best SID but the worst SHD, as it discovers a large number of false edges. On Syntren, GraNDAG is the top performer since the pattern of this dataset is not friendly to ordering-based methods (see Appendix~\ref{sec:dataset_details}). However, CaPS achieves the best performance compared to other ordering-based methods.

To investigate the impact of each component of CaPS, we replace Theorem 1 with a random topological ordering (w/o Theorem 1) or turn off the pre-pruning and edge supplement with parent score (w/o Parent Score). The results show that Theorem 1 makes a major contribution to CaPS, and the parent score can further improve it. Actually, the performance of CaPS can be further improved by adjusting its hyperparameter $\lambda$. The sensitivity of different $\lambda$ is further analyzed in Appendix~\ref{sec:hyperpara}.

\begin{table}[h]
\scriptsize
    \centering
    \caption{Results of real-world datasets, including three methods based on acyclicity constraint and five ordering-based methods. More baselines are given in Appendix~\ref{sec:more_baselines}.}
    \label{real-word}
    \begin{tabular}{c|ccc|ccc}
        \hline
        \toprule
        \textbf{Dataset} & \multicolumn{3}{c|}{\textbf{Sachs}} & \multicolumn{3}{c}{\textbf{Syntren}} \\
        \midrule
        \textbf{Metrics} & \textbf{SHD}$\downarrow$ & \textbf{SID}$\downarrow$ & \textbf{F1}$\uparrow$ & \textbf{SHD}$\downarrow$ & \textbf{SID}$\downarrow$ & \textbf{F1}$\uparrow$ \\
        \midrule
         NOTEARS & \underline{12.0±0.00} & 46.0±0.00 & 0.387±0.000 & \underline{33.9±4.57} & 192.8±54.73 & 0.164±0.085 \\
         GOLEM & 17.0±0.00 & 44.0±0.00 & 0.421±0.000 & 43.7±10.72 & 177.4±56.55 & 0.163±0.066 \\
         GraNDAG & 13.2±0.75 & 54.0±1.10 & 0.373±0.064 & \textbf{26.5±6.45} & \underline{155.3±58.11} & \textbf{0.344±0.104} \\
        \midrule
         CAM & \underline{12.0±0.00} & 55.0±0.00 & \underline{0.444±0.000} & 38.0±5.59 & 178.6±44.56 & 0.223±0.099 \\
         VI-DP-DAG & 42.6±1.36 & \textbf{40.0±5.66} & 0.340±0.037 & 182.6±4.29 & \textbf{144.3±35.00} & 0.069±0.039 \\
         SCORE & \underline{12.0±0.00} & 45.0±0.00 & \underline{0.444±0.000} & 37.5±4.20 & 197.1±63.71 & 0.183±0.091 \\
         DiffAN & 12.2±0.98 & 46.2±6.18 & 0.434±0.078 & 44.1±8.29 & 188.7±55.16 & 0.191±0.095 \\
         DAGuerreotype& 17.9±0.54 & 51.4±0.49 & 0.118±0.034 & 87.9±9.60 & 157.7±48.90 & 0.125±0.047 \\
        \midrule
         CaPS & \textbf{11.0±0.00} & \underline{42.0±0.00} & \textbf{0.500±0.000} & 37.2±5.04 & 178.9±55.58 & \underline{0.230±0.072} \\
         w/o Theorem 1 & 17.0±3.50 & 54.0±3.40 & 0.257±0.061 & 51.6±8.82 & 180.0±66.80 & 0.218±0.090 \\
         w/o Parent Score & 12.0±0.00 & 45.0±0.00 & 0.444±0.000 & 34.8±3.37 & 188.0±57.58 & 0.222±0.083\\
        \bottomrule 
    \end{tabular}
\end{table}

\subsection{Analysis Experiments}
\label{sec:analysis_experiments}

\noindent\textbf{Larger-scale datasets \& actual-time cost.} 
Despite cubic complexity in samples size $n$, the bottleneck of actual-time growth often lies in the number of nodes $d$ in causal graph since many $n$-related operations are GPU-friendly. In Fig. \ref{fig:F1_time}, we illustrate the performance with actual-time cost for all baselines in larger-scale SynER1 ($d=20$ and $d=50$) with 0.5 linear proportion. CaPS consistently achieves best performance in larger-scale causal graph while its time cost is competitive. The full experimental results with actual training time of larger-scale causal graph ($d=20$ and $d=50$) and different samples size ($n=1000$ and $n=5000$) can be found in Appendix~\ref{sec:larger_scale}.
\begin{figure}[!h]
    \centering
    \includegraphics[width=1.0\linewidth]{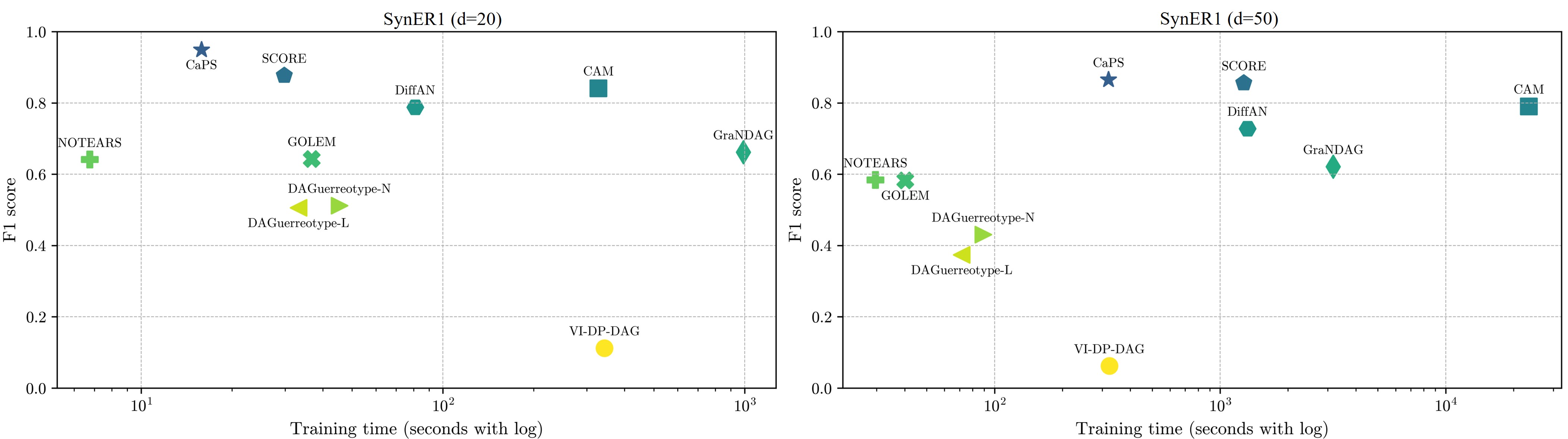}
    \caption{F1 score and training time of SynER1 with larger-scale causal graph.}
 \label{fig:F1_time}
\end{figure}

\noindent\textbf{Order divergence.} Reisach et.al.~\cite{reisach2021beware} cautioned that some synthetic datasets may be so simple that sorting with minimal variance can be successful. To demonstrate the efficacy of CaPS, a new metric called "order divergence"~\cite{rolland2022score} was introduced for evaluation, along with a new baseline \emph{sortnregress} which orders nodes by increasing marginal variance. The results demonstrates that CaPS has a much better order divergence than \emph{sortnregress}, indicating that variance is not a reliable indicator of the topological ordering in our synthetic datasets. CaPS consistently has the best or a competitive order divergence in different datesets. Details can be found in Appendix~\ref{sec:order_divergence}.

\noindent\textbf{Beyond our assumptions.} 
% Theoretically, according to the sufficient condition $(ii)$ of Assumption 1, our methods also has a good potential for unequal variance settings. So, 
We also explore the performance of CaPS under other settings of noise. The results show that CaPS can be effective in situations that go beyond our assumed conditions, which suggests that our approach has the potential to be applied in various other scenarios, and it is possible to consider loosening the assumptions in the future. Details can be found in Appendix~\ref{sec:beyond_assumption}.

\noindent\textbf{Case visualization.}
 Fig. \ref{heatmap} shows the another advantage of CaPS. Compared to the second best baseline, CaPS performs better under all metrics while it provides more information on causal effects. The parent score captures most of the ground-truth edges and the estimated weights are similar to the actual values, indicating that the parent score accurately reflects the strength of causal effect.
 % Fig. \ref{heatmap} shows the intranspose of the parent score $\mathcal{P}^\top$ compared to the ground truth DAG for SynER1 with linear or nonlinear relations. The absolute values of the weighted adjacency matrix represent the actual DAG. For linear causal relations, the parent score captures all the ground-truth edges and the estimated weights are similar to the actual values, indicating that the parent score accurately reflects the strength of the average causal effect. For nonlinear causal relations, the average causal effect is related to the weighted DAG and the data distribution. Therefore, the parent score accumulated from causal relations may not precisely follow the true DAG. The results of the visualization demonstrate that the parent score is also able to capture most of the true edges under nonlinear causal relations. Furthermore, for the purpose of causal discovery, the correctness of the edge prediction is the main focus. Thus, some misestimation can be tolerated.
\begin{figure}[h]
    \centering
    \includegraphics[width=0.9\linewidth]{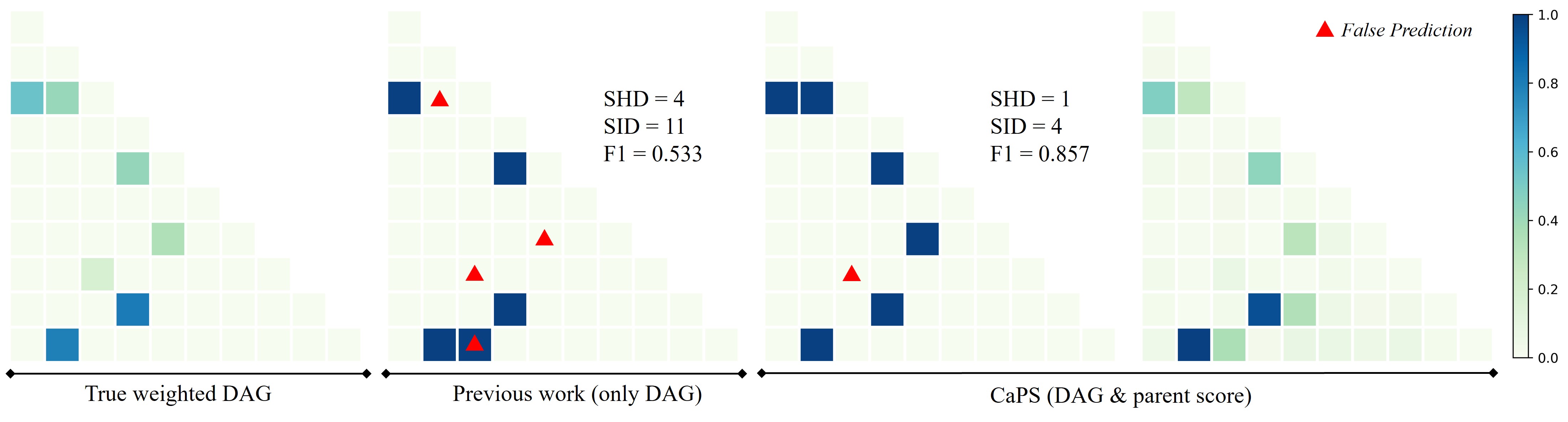}
    \caption{Visualization on SynER1 dataset. Darker colors indicate stronger causal effects.}
    \label{heatmap}
    \vspace{-0.1in}
\end{figure}

\section{Conclusion}

This paper introduces CaPS that is capable of handling datasets with linear and nonlinear relations, which is a common occurrence in real-world applications. We propose a novel identification criterion for topological ordering for both types of relation, as well as a new metric, "parent score", to measure the strength of the average causal effect and used for edge removal and supplementation. Our solutions have been tested on synthetic data with varying linear and nonlinear relationship ratios and have been found to be more effective than existing order-based work and state-of-the-art baselines. 

There remain two interesting directions to be explored in future work. (1) Since the experimental results have been encouraging in some cases beyond our assumption, we are striving to broaden the identifiability conditions to more relaxed conditions. (2) The new metric, "parent score", is likely to have more application scenarios. It is possible to apply as a plug-and-play information of causality.
% Currently, our theoretical analysis is limited to datasets with Gaussian noise of equal variance, although experimental results have been encouraging in cases of non-equal variance Gaussian noise. We are striving to broaden it to more relaxed conditions.

\bibliographystyle{unsrt}
\bibliography{CaPS/NeurIPS24/neurips_2024}

%%%%%%%%%%%%%%%%%%%%%%%%%%%%%%%%%%%%%%%%%%%%%%%%%%%%%%%%%%%%

\newpage
\appendix

\etocsettocstyle{\section*{Appendix}}{} % 设定局部目录的标题
\localtableofcontents

\section{Theoretical Analysis}
\label{sec:theoretical_analysis}

\subsection{Reviewing LISTEN}
\label{sec:listen}
To show why LISTEN does not work under nonlinear causal relations, we review the derivation of precision matrix under linear relations. LISTEN uses the linear SEM $X=\mathbf{B}X+N$, where $\mathbf{B}$ is the autoregression matrix, and $N=(\epsilon_1,...,\epsilon_d)$ is independent Gaussian noises. Since $B$ is a weight matrix of a DAG, $\mathbf{(I-B)}$ is invertible. Then, the covariance matrix of X is equivalent to
\begin{equation*}
    \mathbf{\Sigma} = \mathbb{E}[XX^\top]=\mathbb{E}[\mathbf{(I-B)}^{-1}NN^\top(\mathbf{(I-B)}^{-1})^\top]= \mathbf{(I-B)}^{-1}\mathbf{D}(\mathbf{(I-B)}^{-1})^\top
\end{equation*}
where $\mathbf{D}=\text{\textbf{Diag}}(\sigma_1, ..., \sigma_d)$ is the covariance matrix of Gaussian noise. The precision matrix $\mathbf{\Omega}$ is the inverse covariance matrix, where $\mathbf{\Omega}=\mathbf{(I-B)^\top D^{-1} (I-B)}$. When the causal relations is nonlinear, we cannot write it directly in the form of a linear weight matrix $\mathbf{B}$ multiplied by $X$. Thus, the connection between the precision matrix and the adjacency matrix of the true causal graph no longer holds under nonlinear causal relations. And we can no directly use the precision matrix to determine the leaf nodes.

\subsection{The Proof of Theorem 1}
\label{sec:theorem1_proof}
\begin{theorem}
    Let $s(x) = \nabla \log p(x)$ be the score and let $\text{diag}(\cdot)$ be the diagonal elements of the matrix.
    For any $x_j$ in the causal graph $\mathcal{G}$:

    \centerline{$j=\arg\max(\text{diag}(\mathbb{E}[\frac{\partial s(x)}{\partial x}])) \Rightarrow x_j$ is a leaf node}
\end{theorem}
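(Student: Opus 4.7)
The plan is to compute $\mathbb{E}[\partial s_j(x)/\partial x_j]$ in closed form and then verify, under either sufficient condition of Assumption~1, that leaf nodes attain the maximum. I would start from the score expression derived in Eq.~(2) and differentiate once more with respect to $x_j$, yielding three summands: the self-term $-1/\sigma_j^2$ (the $f_j$ cross term drops since $x_j \notin pa_j(x)$), a nonpositive quadratic term $-\sum_{i \in ch(j)}(\partial f_i/\partial x_j)^2/\sigma_i^2$ that comes from differentiating the factor $(x_i - f_i(pa_i(x)))/\sigma_i^2$ against $-\partial f_i/\partial x_j$, and a Hessian-residual term $\sum_{i \in ch(j)}(\partial^2 f_i/\partial x_j^2)(pa_i(x))\cdot \epsilon_i/\sigma_i^2$ that carries the noise factor $\epsilon_i = x_i - f_i(pa_i(x))$.

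Next I would eliminate the Hessian residual in expectation. Since $\epsilon_i \sim \mathcal{N}(0,\sigma_i^2)$ is independent of $pa_i(x)$ under the ANM assumption, and $(\partial^2 f_i/\partial x_j^2)(pa_i(x))$ is $\sigma(pa_i(x))$-measurable, the tower property factorizes the product and the mean-zero noise kills the term. This collapses the Jacobian diagonal to the clean form of Eq.~(4), namely $\mathbb{E}[\partial s_j/\partial x_j] = -1/\sigma_j^2 - \sum_{i \in ch(j)}\mathbb{E}[(\partial f_i/\partial x_j)^2]/\sigma_i^2$.

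I would then split into the leaf versus non-leaf comparison. For a leaf $x_l$, $ch(l) = \emptyset$ and $\mathbb{E}[\partial s_l/\partial x_l] = -1/\sigma_l^2$. For a non-leaf $x_n$, the children sum is nonnegative so $\mathbb{E}[\partial s_n/\partial x_n] \leq -1/\sigma_n^2$. Under condition~(i), a leaf can always be placed last in a valid topological ordering, so $\sigma_l \geq \sigma_n$ and therefore $-1/\sigma_l^2 \geq -1/\sigma_n^2 \geq \mathbb{E}[\partial s_n/\partial x_n]$. Under condition~(ii), the quantitative lower bound $\sum_{i \in ch(n)} \mathbb{E}[(\partial f_i/\partial x_n)^2]/\sigma_i^2 \geq 1/\sigma_{\min}^2 - 1/\sigma_n^2$ rearranges directly to $\mathbb{E}[\partial s_n/\partial x_n] \leq -1/\sigma_{\min}^2 \leq -1/\sigma_l^2$. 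In both regimes every non-leaf value is dominated by some leaf's, so any element of $\arg\max$ must be a leaf.

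The main obstacle I anticipate is the book-keeping under condition~(i): one must justify that the leaf's noise variance dominates every non-leaf's, which relies on the freedom to place a leaf at the tail of a valid ordering and then invoking the monotonicity constraint across the resulting pairs. A secondary point of care is the independence argument for the Hessian residual when $x_j$ shares parenthood of $x_i$ with other nodes in $pa_i(x)$; the joint ANM independence $pa_i(x) \perp \epsilon_i$ still makes the tower computation go through without needing extra assumptions, and this is worth stating explicitly to avoid confusion in the multivariate parent case.
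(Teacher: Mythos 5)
Your proposal is correct and follows essentially the same route as the paper's own proof: the same three-term expansion of $\frac{\partial s_j(x)}{\partial x_j}$, the same use of $pa_i(x)\perp\!\!\!\perp\epsilon_i$ to annihilate the Hessian-residual term in expectation, and the same leaf-versus-non-leaf comparison split across conditions $(i)$ and $(ii)$. Your explicit handling of why the leaf's variance dominates under condition $(i)$ (placing the leaf at the tail of a valid ordering) is in fact slightly more careful than the paper's one-line claim that $\pi(n)<\pi(l)$.
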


\begin{proof}
    % \noindent For an arbitrary node $x_j$ in the causal graph $\mathcal{G}$, we focus on the diagonal of the score's Jacobian.
    % \begin{equation}
    %     \begin{aligned}
    %         \frac{\partial s_j(x)}{\partial x_j}&=-\frac{1}{\sigma_j^2}-\sum_{i\in ch(j)}\frac{1}{\sigma_i^2}(\frac{\partial f_i}{\partial x_j}(pa_i(x)))^2 \\
    %         &+\sum_{i\in ch(j)}\frac{\partial^2 f_i}{\partial x_j^2}(pa_i(x))\cdot\frac{x_i-f_i(pa_i(x))}{\sigma_i^2}
    %     \end{aligned}
    % \end{equation}
    % Since $\frac{x_i-f_i(pa_i(x))}{\sigma_i^2}=\frac{\epsilon_i}{\sigma_i^2}\sim\mathcal{N}(0, \frac{1}{\sigma_i^2})$ and $pa_i(x)\perp\!\!\!\perp \epsilon_i$ in our SEM, the expectation of $\frac{\partial s_j(x)}{\partial x_j}$ can be restated as:
    % \begin{equation}
    %     \label{equ:exp_score_jaco}
    %     \mathbb{E}[\frac{\partial s_j(x)}{\partial x_j}]=-\frac{1}{\sigma_j^2}-\sum_{i\in Ch(j)}\frac{1}{\sigma_i^2}\mathbb{E}[(\frac{\partial f_i}{\partial x_j}(pa_i(x)))^2]
    % \end{equation}
    % Suppose that $x_l$ is a leaf node and $x_n$ is a non-leaf node, we have $\mathbb{E}[\frac{\partial s_l(x)}{\partial x_l}]=-\frac{1}{\sigma_l^2}$.
    % Then, $\mathbb{E}[\frac{\partial s_l(x)}{\partial x_l}]\geq \mathbb{E}[\frac{\partial s_n(x)}{\partial x_n}]$ always holds under conditions $(i)$ or $(ii)$ in assumption 1. Thus, the node in $\arg\max(\text{diag}(\mathbb{E}[\frac{\partial s(x)}{\partial x}]))$ will always be leaf node.
    
    For an arbitrary node $x_j$ in the causal graph $\mathcal{G}$, we focus on the diagonal of the score's Jacobian. The expression of $\frac{\partial s_j(x)}{\partial x_j}$ is derived as follows:
    \begin{equation}
        \label{equ:score_jaco_app}
        \frac{\partial s_j(x)}{\partial x_j}=-\frac{1}{\sigma_j^2}-\sum_{i\in ch(j)}\frac{1}{\sigma_i^2}(\frac{\partial f_i}{\partial x_j}(pa_i(x)))^2+\sum_{i\in ch(j)}\frac{\partial^2 f_i}{\partial x_j^2}(pa_i(x))\cdot\frac{x_i-f_i(pa_i(x))}{\sigma_i^2}
    \end{equation}
    The residual $x_i-f_i(pa_i(x))$ in the last term of the RHS of Eq.\ref{equ:score_jaco_app} is additive noise $\epsilon_i$ as stated in Equation 1, which implies that $\frac{x_i-f_i(pa_i(x))}{\sigma_i^2}=\frac{\epsilon_i}{\sigma_i^2}\sim\mathcal{N}(0, \frac{1}{\sigma_i^2})$. Furthermore, since $\epsilon_i$ and $pa_i(x)$ are independent of each other in our SEM, the expectation of the last term can be expressed as:
    \begin{equation}
        \label{equ:rhs_last_app}
        \begin{aligned}
            &\mathbb{E}[\sum_{i\in ch(j)}\frac{\partial^2 f_i}{\partial x_j^2}(pa_i(x))\cdot\frac{x_i-f_i(pa_i(x))}{\sigma_i^2}] \\
            =\ &\mathbb{E}[\sum_{i\in ch(j)}\frac{\partial^2 f_i}{\partial x_j^2}(pa_i(x))]\cdot\mathbb{E}[\frac{\epsilon_i}{\sigma_i^2}]=0
        \end{aligned}
    \end{equation}
    With Eq.\ref{equ:score_jaco_app} and Eq.\ref{equ:rhs_last_app}, the expectation of $\frac{\partial s_j(x)}{\partial x_j}$ can be restated as:
    \begin{equation}
        \label{equ:exp_score_jaco_app}
        \mathbb{E}[\frac{\partial s_j(x)}{\partial x_j}]=-\frac{1}{\sigma_j^2}-\sum_{i\in Ch(j)}\frac{1}{\sigma_i^2}\mathbb{E}[(\frac{\partial f_i}{\partial x_j}(pa_i(x)))^2]
    \end{equation}
    According to Eq.\ref{equ:exp_score_jaco_app}, the expectation of $\frac{\partial s_j(x)}{\partial x_j}$ is only dependent on the current node $x_j$ and its children. Suppose that $x_l$ is a leaf node and $x_n$ is a non-leaf node, we have $\mathbb{E}[\frac{\partial s_l(x)}{\partial x_l}]=-\frac{1}{\sigma_l^2}$ and $\mathbb{E}[\frac{\partial s_n(x)}{\partial x_n}]=-\frac{1}{\sigma_n^2}-\sum_{i\in Ch(n)}\frac{1}{\sigma_i^2}\mathbb{E}[(\frac{\partial f_i}{\partial x_n}(pa_i(x)))^2]$.

    \begin{itemize}
        \item[(i)] If the sufficient condition $(i)$ is satisfied, we have non-decreasing variance of noises. For any two noises $\epsilon_i$ and $\epsilon_j$ of different nodes, $\sigma_j \geq \sigma_i$ if $\pi(i)<\pi(j)$. Since $x_l$ is a leaf node and $x_n$ is a non-leaf node, $\pi(n)<\pi(l)$, we have $\sigma_l \geq \sigma_n$. And, $\frac{1}{\sigma_i^2}\mathbb{E}[(\frac{\partial f_i}{\partial x_j}(pa_i(x)))^2] \geq 0$. Therefore, we can get
        \begin{equation}
            \mathbb{E}[\frac{\partial s_l(x)}{\partial x_l}]=-\frac{1}{\sigma_l^2} \geq -\frac{1}{\sigma_n^2} \geq -\frac{1}{\sigma_n^2}-\sum_{i\in Ch(n)}\frac{1}{\sigma_i^2}\mathbb{E}[(\frac{\partial f_i}{\partial x_n}(pa_i(x)))^2] = \mathbb{E}[\frac{\partial s_n(x)}{\partial x_n}]
        \end{equation}
        This equation implies that the value of the leaf node will always be greater than the non-leaf node in the diag of the expectation of score's Jacobian. And, for any causal graph, there is at least one leaf node $x_l$. Thus, the $\arg\max(\text{diag}(\mathbb{E}[\frac{\partial s(x)}{\partial x}]))$ is the index of the leaf node, i.e., $j=\arg\max(\text{diag}(\mathbb{E}[\frac{\partial s(x)}{\partial x}])) \Rightarrow x_j$ is a leaf node.
        
        \item[(ii)] If the sufficient condition $(ii)$ is satisfied, we have non-weak causal effect of parents. For any non-leaf nodes $x_j$, $\sum_{i\in Ch(j)}\frac{1}{\sigma_i^2}\mathbb{E}[(\frac{\partial f_i}{\partial x_j}(pa_i(x)))^2] \geq \frac{1}{\sigma_\text{min}} - \frac{1}{\sigma_j}$, where $\sigma_\text{min}$ is the minimum variance for all noises. Thus, the following inequality holds.
        \begin{equation}
            \begin{aligned}
                &\mathbb{E}[\frac{\partial s_l(x)}{\partial x_l}]=-\frac{1}{\sigma_l^2} \geq -\frac{1}{\sigma_\text{min}^2} = -\frac{1}{\sigma_n^2}-(\frac{1}{\sigma_\text{min}^2}-\frac{1}{\sigma_n^2}) \\
                &\geq -\frac{1}{\sigma_n^2}-\sum_{i\in Ch(n)}\frac{1}{\sigma_i^2}\mathbb{E}[(\frac{\partial f_i}{\partial x_n}(pa_i(x)))^2] = \mathbb{E}[\frac{\partial s_n(x)}{\partial x_n}]
            \end{aligned}
        \end{equation}
        Therefore, it can be obtained in the same way that $j=\arg\max(\text{diag}(\mathbb{E}[\frac{\partial s(x)}{\partial x}])) \Rightarrow x_j$ is a leaf node.

    \end{itemize}
    
    Thus, under the sufficient conditions $(i)$ or $(ii)$, we can proof that the node in $\arg\max(\text{diag}(\mathbb{E}[\frac{\partial s(x)}{\partial x}]))$ will always be the leaf node.
\end{proof}
\subsection{Derivation of SATE and parent score}
\label{sec:SATE_details}
To illustrate the meaning of this definition, we propose a new metric of the Squared Average Treatment Effect (SATE) extended from Average Treatment Effect (ATE, $\mathbb{E}[Y^{(T=1)} - Y^{(T=0)}]$), which can be rewritten as
\begin{equation}
    \text{SATE}_i^j=\mathbb{E}[(x_i^{(T_j=1)}-x_i^{(T_j=0)})^2]
    \label{equ:gce2}
\end{equation}
In order to establish an association between the parent score and the causal effect, the treatment on $x_j$ is defined as follows:
    \begin{equation}
        \begin{aligned}
            &x_i^{(T_j=1)}=f_i(pa_i(x+\Delta_j))+\epsilon_i \\
            &x_i^{(T_j=0)}=f_i(pa_i(x))+\epsilon_i
        \end{aligned}
    \end{equation}
where $\Delta_j=[0,...,\delta,...,0]$ is a one-hot vector denoting an additive treatment on $x_j$. Here, for each $j$, we design the $j$-th value of $\Delta_j$ as $\delta$ which denotes the strength of treatment, which is a very small positive value. Due to the small value of $\delta\rightarrow 0^+$, for each variable $x_i$, we can use the optimal linear approximation to represent $x_i^{(T_j=1)}$. Then, for the $j$-th treatment, SATE is equal to:
\begin{equation}
    \text{SATE}_i^j=\delta^2\cdot\mathbb{E}[(\frac{\partial f_i}{\partial x_j}(pa_i(x)))^2]
\end{equation}
For each treatment, we treat with the same strength $\delta$. Therefore, the average causal effect can be represented by $\mathbb{E}[(\frac{\partial f_i}{\partial x_j}(pa_i(x)))^2]$. Obviously, its value is zero when $x_j$ is not a parent of $x_i$. If $x_j$ is a parent of $x_i$, $\mathbb{E}[(\frac{\partial f_i}{\partial x_j}(pa_i(x)))^2]$ indicates the influence strength from $x_j$ to $x_i$. Thus, scaled by the variance of the children's noise, we can use $\frac{1}{\sigma_i^2}\mathbb{E}[(\frac{\partial f_i}{\partial x_j}(pa_i(x)))^2]$ to approximately represent the strength of the average causal effect, which is called the parent score. This new metric considers both causal effect and noise variance, which can be considered as the visible part of causal effect if the variance of noise is close or equal. This assumption does not affect the sufficient conditions for the identifiability of topological ordering and is only supposed in post-processing, which has been used frequently in previous works \cite{peters2014EqualVariances,ghoshal2018LISTEN}.

\subsection{Derivation of computing parent score}
\label{sec:parent_score_details}
Parent score cannot be obtained directly from the summation of average causal effects of a node on its children in Eq.\ref{equ:exp_score_jaco_app}, thus we propose an iterative decoupling process and define
% Due to the summation of average causal effects of a node on its childrens in Eq.5, we can not obtain parent score directly from Eq.5. To address this problem, we propose an iterative decoupling process to compute parent score. Concretely speaking, we define 
\begin{equation}
\label{equ:J_app}
\mathcal{J}=\{\mathbb{E}[\frac{\partial s_1(x)}{\partial x_1}], \mathbb{E}[\frac{\partial s_2(x)}{\partial x_2}],..., \mathbb{E}
[\frac{\partial s_d(x)}{\partial x_d}]\}
\end{equation}
to denote the expectation of the diagonal of the score's Jacobian. By removing the node $x_i$, a new vector $\mathcal{J}_{-i}$ is defined as follows:
\begin{equation}
    \label{equ:J_i_app}
    \begin{aligned}
        \mathcal{J}_{-i}=&\{\mathbb{E}[\frac{\partial s_1(x_{-i})}{\partial x_1}], \mathbb{E}[\frac{\partial s_2(x_{-i})}{\partial x_2}],...,\mathbb{E}[\frac{\partial s_i(x)}{\partial x_i}], ..., \mathbb{E}[\frac{\partial s_d(x_{-i})}{\partial x_d}]\}
    \end{aligned}
\end{equation}
where $x_{-i}$ represents the remaining data after removing the feature of $i$-th dimension. For the $i$-th element of $\mathcal{J}_{-i}$, we fill the $i$-th element of $\mathcal{J}$. To simplify the notation, we define in this subsection that $\mathcal{J}^{(j)}$ and $=\mathcal{J}^{(j)}_{-i}$ are the $j$-th element of the corresponding vector. Then, for $j$-th element, we have
\begin{equation}
    \mathcal{J}^{(j)} = -\frac{1}{\sigma_j^2}-\sum_{k\in Ch(j)}\frac{1}{\sigma_k^2}\mathbb{E}[(\frac{\partial f_k}{\partial x_j}(pa_k(x)))^2]
\end{equation}
\begin{equation}
        \mathcal{J}_{-i}^{(j)} = -\frac{1}{\sigma_j^2}-\sum_{k\in Ch(j)/x_i}\frac{1}{\sigma_k^2}\mathbb{E}[(\frac{\partial f_k}{\partial x_j}(pa_k(x)))^2]
\end{equation}
where $Ch(j)/x_i$ denotes the set which the element $x_i$ has been removed if it is the children of $x_j$. 
If $x_i$ is not the children of $x_j$, $Ch(j)$ is equal to $Ch(j)/x_i$. 
According to the definition of parent score, we have
\begin{equation}
    \mathcal{P}_{i,j} = \mathcal{J}^{(j)}_{-i}-\mathcal{J}^{(j)}=
    \begin{cases}
        \frac{1}{\sigma_i^2}\mathbb{E}[(\frac{\partial f_i}{\partial x_j}(pa_i(x)))^2], & x_j\in pa_i(x) \\
        0, x_j\notin pa_i(x)
    \end{cases}
    \label{equ:pij_app}
\end{equation}
Then, each row vector of the matrix of parent score $\mathcal{P}\in\mathbb{R}^{d\times d}$ is equivalent to:
\begin{equation}
    \label{equ:P_i_app}
    \mathcal{P}_{i,:}=\mathcal{J}_{-i}-\mathcal{J}
\end{equation}
Thus, we can obtain the parent score by iteratively removing nodes and estimating the score's Jacobian.

\subsection{The Proof of Corollary 1}
\label{sec:cor1_proof}
\noindent\textbf{Corollary 1.} \emph{$j=\arg\max(\text{diag}(\mathbb{E}[\frac{\partial s(x)}{\partial x}])) \Rightarrow $ $x_j$'s sum of parent score $\sum_{i=0}^d\mathcal{P}_{i, j}\text{ is minimal} \Rightarrow x_j$ is a leaf node }
\begin{proof}
    We first prove that \emph{$j=\arg\max(\text{diag}(\mathbb{E}[\frac{\partial s(x)}{\partial x}])) \Rightarrow $ $x_j$'s sum of parent score $\sum_{i=0}^d\mathcal{P}_{i, j}\text{ is minimal}$}. 
    % According to Eq.\ref{equ:exp_score_jaco}, we can restate the discriminant criterion as $j=\arg\min(\text{diag}(-\mathbb{E}[\frac{\partial s(x)}{\partial x}]))$.
    % For any node $x_j$, we have $-\mathbb{E}[\frac{\partial s(x)}{\partial x}]=\frac{1}{\sigma_j^2}+\sum_{i\in Ch(j)}\frac{1}{\sigma_i^2}\mathbb{E}[(\frac{\partial f_i}{\partial x_j}(pa_i(x)))^2]$. 
    Under the sufficient condition $(ii)$, the sum of parent score of non-leaf node has a low bound $\frac{1}{\sigma_\text{min}} - \frac{1}{\sigma_j}$. According to the definition of parent score, for any node $x_j$, we have $\sum_{i=0}^d\mathcal{P}_{i, j} = 0$ or $\sum_{i=0}^d\mathcal{P}_{i, j} \geq \frac{1}{\sigma_\text{min}} - \frac{1}{\sigma_j}$. 
    Thus, for an arbitrary element in the diagonal score's Jacobian, we have
    % $\mathbb{E}[\frac{\partial s_j(x)}{\partial x_j}]$, the value will be equal to $-\frac{1}{\sigma_j^2}$ if $\sum_{i=0}^d\mathcal{P}_{i, j} = 0$ or less than $-\frac{1}{\sigma_\text{min}^2}$ if $\sum_{i=0}^d\mathcal{P}_{i, j} \geq \frac{1}{\sigma_\text{min}} - \frac{1}{\sigma_j}$. 
    \begin{equation}
        \begin{cases}
            \mathbb{E}[\frac{\partial s_j(x)}{\partial x_j}]=-\frac{1}{\sigma_j^2}, \quad \sum_{i=0}^d\mathcal{P}_{i, j} = 0 \\
            \mathbb{E}[\frac{\partial s_j(x)}{\partial x_j}]\leq -\frac{1}{\sigma_\text{min}^2}, \quad \sum_{i=0}^d\mathcal{P}_{i, j} \geq \frac{1}{\sigma_\text{min}} - \frac{1}{\sigma_j}
        \end{cases}
    \end{equation}
    Obviously, $-\frac{1}{\sigma_j^2}$ is always greater than $-\frac{1}{\sigma_\text{min}^2}$. Therefore, given a $x_j$ with $j=\arg\max(\text{diag}(\mathbb{E}[\frac{\partial s(x)}{\partial x}]))$, its sum of parent score is minimal and $\sum_{i=0}^d\mathcal{P}_{i, j} = 0$.
    
    Then, we prove that \emph{$x_j$'s sum of parent score $\sum_{i=0}^d\mathcal{P}_{i, j}\text{ is minimal} \Rightarrow x_j$ is a leaf node}. Suppose that $x_l$ is a leaf node and $x_n$ is a non-leaf node, according to the definition of parent score, we have $\sum_{i=0}^d\mathcal{P}_{i, n} > 0 = \sum_{i=0}^d\mathcal{P}_{i, l}$. Therefore, the node with the minimal sum of the parent score is a leaf node.
    
\end{proof}

\section{Pseudocode of Post-processing}
\label{sec:post_pseudocode}
CaPS introduces the pre-pruning and edge supplement process in post-processing, which considers more information about causal effect by using the parent score. The complete post-processing procedure is shown in Algorithm \ref{alg:algorithm2}, where $\mathcal{P}_\text{max}$ is the matrix broadcasted with the maximum value of each row of $\mathcal{P}$. For hyperparameter $\lambda$ selection, we set to $\lambda=50$ for all datasets to avoid any dataset-specific tuning, which can be further optimized in appendix \ref{sec:hyperpara}.
\begin{algorithm}[h]
\caption{Post-processing}
\label{alg:algorithm2}
\textbf{Input}: data matirx $X\in\mathbb{R}^{n\times d}$, rigor $\lambda$, permutation $\pi$, parent score $\mathcal{P}$\\
\textbf{Output}: adjacency matrix $\mathcal{A}$

\begin{algorithmic}[1] %[1] enables line numbers
\STATE initialize $\mathcal{A}$ using $\pi$
\STATE $\mathcal{A}\leftarrow \mathcal{A} \odot \text{int}(\mathcal{P} < \frac{\mathcal{P}_\text{max}}{\lambda})^{\top}$ // pre-pruning
\STATE $\mathcal{A}\leftarrow$ CAM pruning($\mathcal{A}$, $X$)
\STATE $\mathcal{E}\leftarrow \text{int}(\mathcal{P} > \lambda\cdot\text{avg}(\mathcal{P}^\top\odot\mathcal{A}))$
\FOR{each edge (i, j) in $\mathcal{E}$ sorted by $\mathcal{P}_{i,j}$}
\STATE $\mathcal{A}_{j,i}\leftarrow 1$ if $\mathcal{A}$ is still acyclic // edge supplement
\ENDFOR
\STATE \textbf{return} $\mathcal{A}$
\end{algorithmic}
\end{algorithm}
\vspace{-0.2in}

\section{Experiments}
\label{sec:experiments}
All experiments were run on EPYC 7552*2 with 512G memory and NVIDIA RTX 4090 32GB.
\vspace{-0.1in}

\subsection{Dataset Details}
\label{sec:dataset_details}
Synthetic data are created using the Erd\"{o}s-R\'enyi (ER) \cite{erdHos1960ER} or Scale-Free (SF) models\cite{barabasi1999SF}. The degree of each node in the ER graph is relatively even, whereas some nodes in the SF graph may have very high degrees. Each dataset has 2000 samples and 10 nodes by default, while larger-scale datasets and different samples size are also given in Appendix~\ref{sec:larger_scale}. The sparsity of the graph is altered by changing the average number of edges to either $d$ or four times $d$, referred to as SynER1 and SynER4. The weight of each edge is randomly chosen from the range of $[-1, -0.1]\cup[0.1, 1]$.
To create datasets with both linear and nonlinear relationships between variables, we set the noise distribution to be Gaussian with a mean of 0 and a variance of 1. To generate linear causal functions $f_i$, we use weighted causal graphs. For nonlinear causal functions $f_i$, we sample Gaussian processes with a unit bandwidth RBF kernel and multiply them by the weights in the causal graph, similar to SCORE and DiffAN. A collection of five datasets is created for SynER1 and SynER4, with a range of linear and nonlinear causal connections, from 0.0 (all nonlinear) to 1.0 (all linear) with step 0.25.

\begin{figure}[h]
    \centering
    \includegraphics[width=0.5\linewidth]{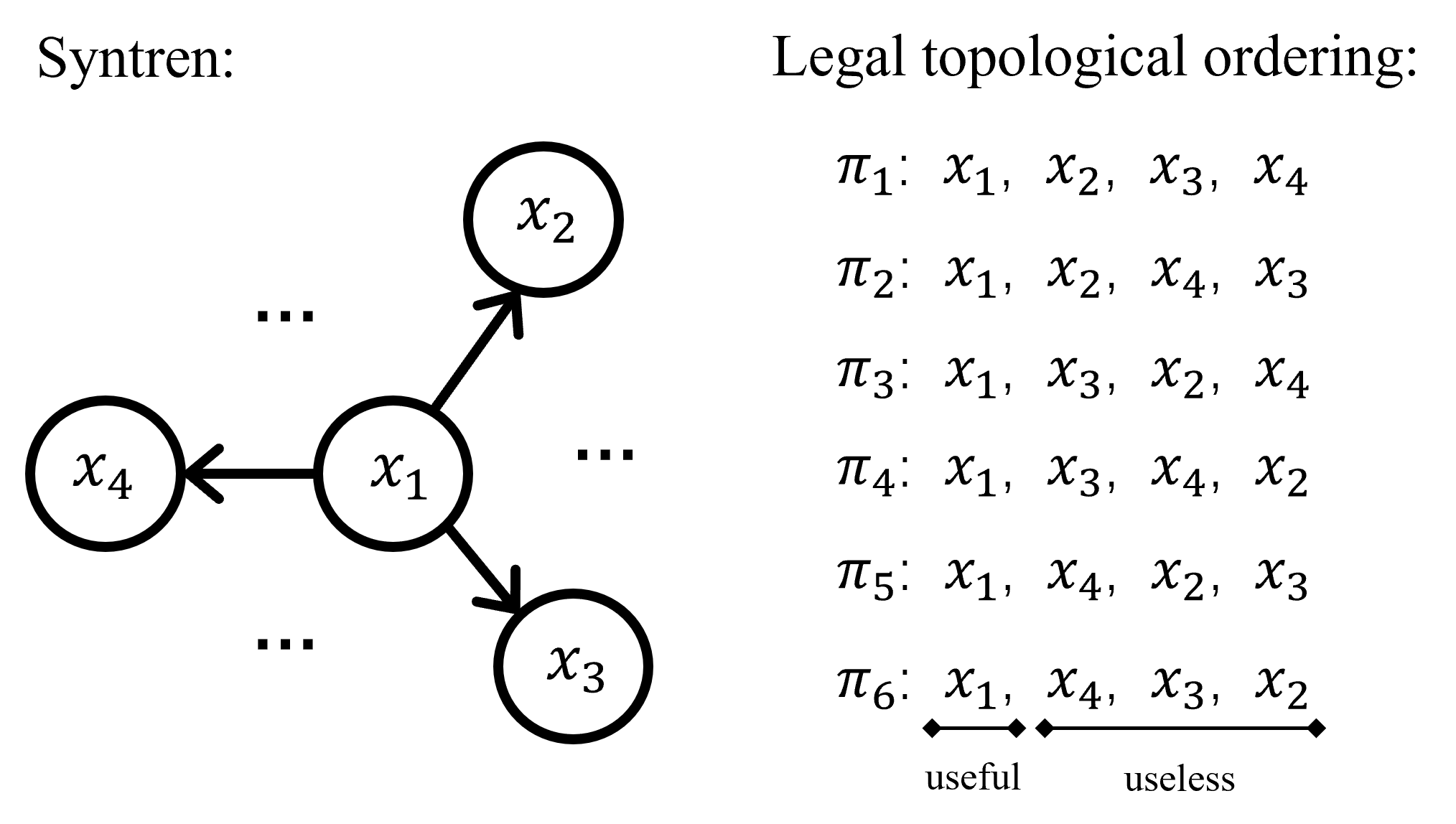}
    \vspace{-0.1in}
    \caption{Example of the Syntren dataset.}
    \label{syntren_example}
\end{figure}

Two real datasets are adopted. Sachs~\cite{sachs2005causal}, a protein signaling network based on protein expression levels and phospholipids, consists of 11 nodes, 853 observations, and 17 edges from ground-truth causal graph; Syntren \cite{van2006syntren}, a pseudoreal data set sampled from the Syntren generator, consists of 10 transcriptional networks, each consisting of 500 observations and a DAG consisting of $d=20$ nodes and edges with $e\in\{20,...,25\}$. For the two real datasets, we only have ground-truth DAG but no relation types. To quantitatively give more insight into their linear and nonlinear proportion, we estimate the type of relation in the DAG with a rough metric: Pearson correlation$>$0.5 as linear relation. Accordingly, Sachs has 0.18 linear ratio while Syntren averaged 0.92 for different DAGs (0.72 $ \sim $ 1.0). Although these ratios are rather rough, it is evident that mix relations are widespread in realistic applications. This explains why CaPS is effective in different datasets. 

Fig.~\ref{syntren_example} shows an example of Syntren dataset, which is a special dataset containing many star networks. In this network structure, only the topological ordering of the first node $x_1$ is meaningful, and the other nodes can be ordered randomly after $x_1$. This star structure is not friendly to ordering-based methods, since topological ordering provides more limited information in such a causal graph. However, among all ordering-based methods, CaPS achieves the highest performance in this dataset.

\subsection{Additional Synthetic Datasets}
\label{sec:SynSF1_4}
Fig. \ref{fig:synSF} shows the experiments results of the SynSF1 and SynSF4 datasets. We can observe that CaPS performs better for both sparser (SynSF1) and denser (SynSF4) graphs in almost all ranges, especially when the linear proportions above 0.25. The results show that CaPS performs consistently well in almost all ratios and different DAG generator (ER or SF). 

\begin{figure*}[h]
\centering
  \subfigure[SynSF1]{\includegraphics[width=1.0\linewidth]{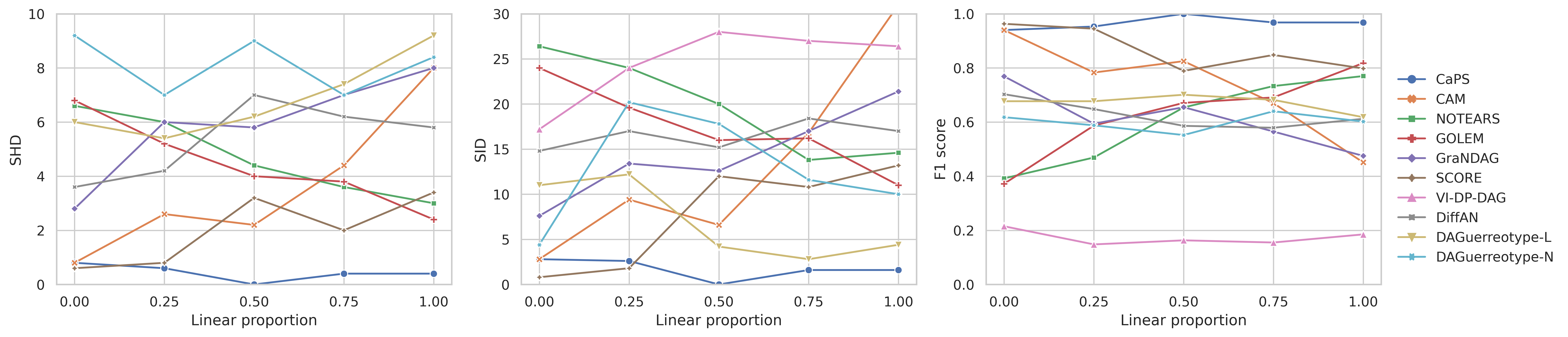}}
  \subfigure[SynSF4] {\includegraphics[width=1.0\linewidth]{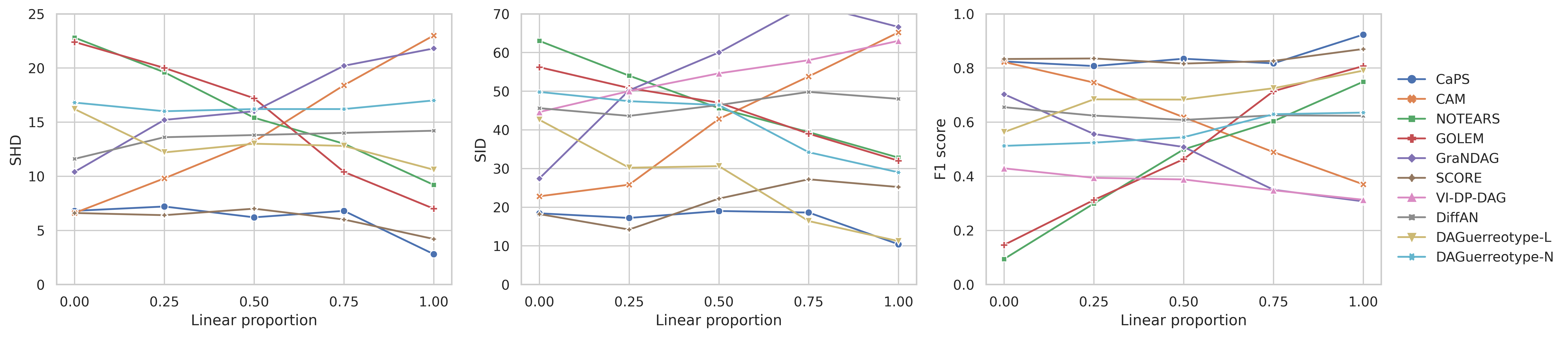}}
  \caption{Results of SynSF1 and SynSF4 with different linear proportions.}
    \label{fig:synSF}
\end{figure*}

\begin{figure*}[h]
\centering
  \subfigure[Sachs]{\includegraphics[width=0.9\linewidth]{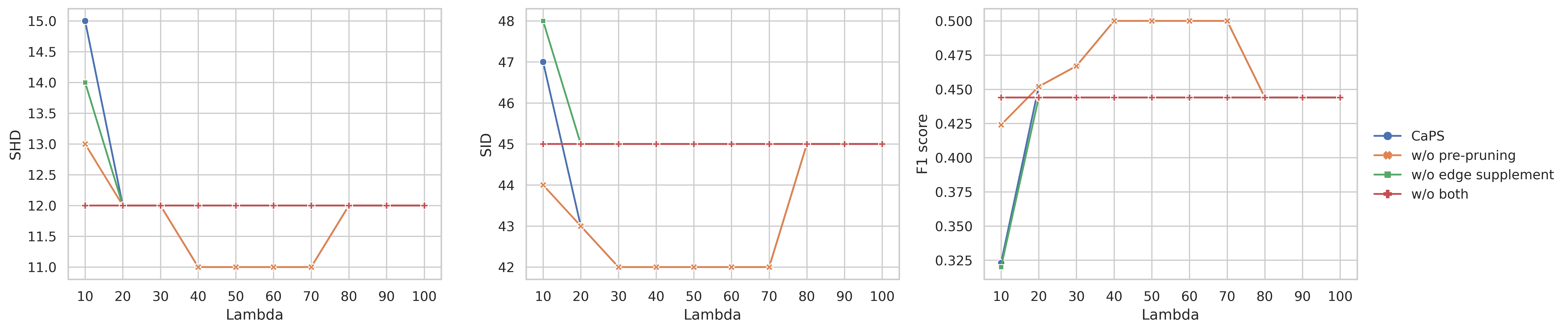}}
  \subfigure[Syntren]{\includegraphics[width=0.9\linewidth]{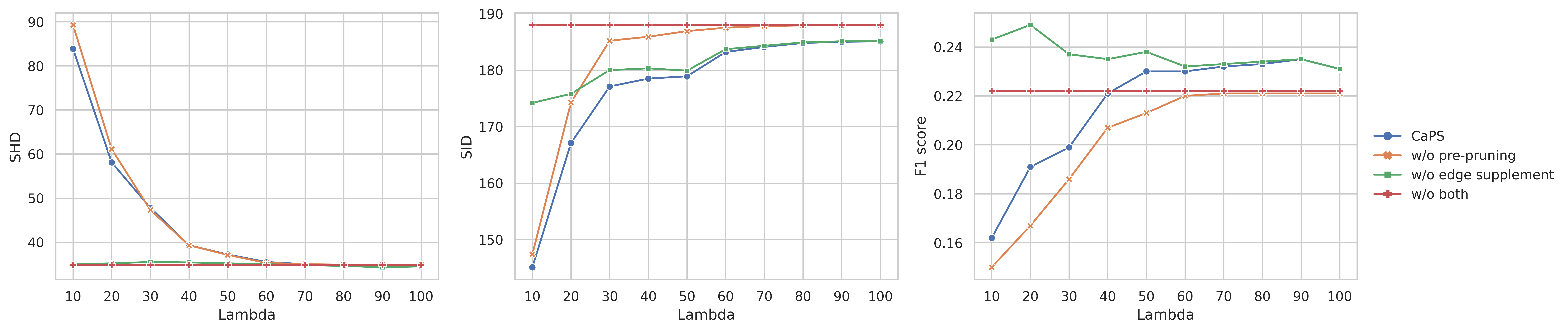}}
  \caption{Performance under different hyperparameter $\lambda$.}
    \label{fig:hyperparameter}
\end{figure*}

\subsection{Hyperparameter Analysis}
\label{sec:hyperpara}
We further study in Fig. \ref{fig:hyperparameter} the effect of different $\lambda$ on the performance of pre-pruning and edge supplement on the real datasets. As we mentioned in the ablation study, edge supplement is more effective on the Sachs dataset, while pre-pruning is more effective on the Syntren dataset. For Sachs dataset, edge supplement works best under $\lambda\in[40, 70]$, and CaPS can maintain better or competitive performance in a large range of the hyperparameter $\lambda$. For Syntren dataset, since it is a difficult dataset for causal discovery, it could be possible to add some incorrect edges when $\lambda$ is too small. Pre-pruning shows more potential in the Syntern dataset, which can be further improved by selecting different hyperparameter $\lambda$. Based on the results shown in Fig. \ref{fig:hyperparameter}, we can see that the performance of CaPS can be further improved by setting different hyperparameters for pre-pruning and edge supplement, e.g., $\lambda=20$ for pre-pruning and $\lambda=70$ for edge supplement.

\subsection{More Baselines}
\label{sec:more_baselines}
We additionally consider six baselines of continuous optimization: DAG-GNN \cite{yu2019DAG_GNN}, a nonlinear extension of NOTEARS that uses an evidence lower bound as score; GAE \cite{ng2019GAE}, an autoencoder based method that further extends the NOTEARS and DAG-GNN to facilitate nonlinear relations; NOTEARS-MLP \cite{zheng2020NOTEARS_MLP} is a nonlinear extension of NOTEARS; DAGMA \cite{bello2022dagma} uses a log-determinant acyclicity characterization; TOPO \cite{deng2023topo} optimizes with iteratively swapping pairs of nodes within the topological ordering; CASTLE \cite{kyono2020castle} uses causal discovery as an auxiliary task to the prediction task. The results of the real data experiments on fourteen baselines are presented in Table~\ref{more_baselines}. On the Syntren datasets, NOTEARS-MLP achieves suboptimal SID but poor SHD, and CASTLE achieves suboptimal SHD but poor SID, suggesting that their predictions are too dense or sparse. Our method achieved the highest SHD and F1 scores on Sachs, with SID second to VI-DP-DAG. VI-DP-DAG had the best SID but the worst SHD, as it discovered a large number of false edges.
\begin{table*}[h]
\scriptsize
    \centering
    \caption{Results of real-world datasets.}
    \begin{tabular}{c|ccc|ccc}
    \hline
        \toprule
        \textbf{Dataset} & \multicolumn{3}{c|}{\textbf{Sachs}} & \multicolumn{3}{c}{\textbf{Syntren}} \\
        \midrule
        \textbf{Metrics} & \textbf{SHD}$\downarrow$ & \textbf{SID}$\downarrow$ & \textbf{F1}$\uparrow$ & \textbf{SHD}$\downarrow$ & \textbf{SID}$\downarrow$ & \textbf{F1}$\uparrow$ \\
        \midrule
         CAM & \underline{12.0±0.00} & 55.0±0.00 & \underline{0.444±0.000} & 38.0±5.59 & 178.6±44.56 & 0.223±0.099 \\
         NOTEARS & \underline{12.0±0.00} & 46.0±0.00 & 0.387±0.000 & 33.9±4.57 & 192.8±54.73 & 0.164±0.085 \\
         DAG-GNN & 14.0±0.00 & 45.0±0.00 & 0.397±0.005 & 32.4±4.86 & 191.2±52.66 & 0.156±0.066 \\
         GAE & 17.2±1.60 & 50.2±2.93 & 0.119±0.112 & 79.7±13.13 & 156.3±79.10 & 0.127±0.077 \\
         NOTEARS-MLP & 14.4±0.49 & 46.0±0.00 & 0.359±0.005 & 114.7±30.50 & \underline{150.4±48.51} & 0.093±0.037 \\
         DAGMA & 13.0±0.00 & 46.0±0.00 & 0.370±0.000 & 35.6±6.70 & 191.8±54.50 & 0.186±0.062 \\
         TOPO & 21.6±0.40 & 44.0±0.00 & 0.303±0.000 & 39.0±12.90 & 191.0±57.20 & 0.227±0.118 \\
         GOLEM & 17.0±0.00 & 44.0±0.00 & 0.421±0.000 & 43.7±10.72 & 177.4±56.55 & 0.163±0.066 \\
         CASTLE & 15.8±4.12 & 46.8±0.98 & 0.299±0.053 & \underline{30.5±8.14} & 206.1±61.23 & 0.096±0.081 \\
         GraNDAG & 13.2±0.75 & 54.0±1.10 & 0.373±0.064 & \textbf{26.5±6.45} & 155.3±58.11 & \textbf{0.344±0.104} \\
         VI-DP-DAG & 42.6±1.36 & \textbf{40.0±5.66} & 0.340±0.037 & 182.6±4.29 & \textbf{144.3±35.00} & 0.069±0.039 \\
         SCORE & \underline{12.0±0.00} & 45.0±0.00 & \underline{0.444±0.000} & 37.5±4.20 & 197.1±63.71 & 0.183±0.091 \\
         DiffAN & 12.2±0.98 & 46.2±6.18 & 0.434±0.078 & 44.1±8.29 & 188.7±55.16 & 0.191±0.095 \\
         DAGuerreotype& 17.9±0.54 & 51.4±0.49 & 0.118±0.034 & 87.9±9.60 & 157.7±48.90 & 0.125±0.047 \\
        \midrule
         CaPS & \textbf{11.0±0.00} & \underline{42.0±0.00} & \textbf{0.500±0.000} & 37.2±5.04 & 178.9±55.58 & \underline{0.230±0.072} \\
        \bottomrule 
    \end{tabular}
    \label{more_baselines}
\end{table*}

% \subsection{F ACCELERATION OF PRUNING}
\subsection{Larger-scale datasets \& actual-time cost.}
\label{sec:larger_scale}
\begin{table}[htbp]
\setlength{\tabcolsep}{1pt} % 默认值通常是6pt
\scriptsize
  \centering
  \caption{Results of larger-scale datasets with actual-time cost.}
    \begin{tabular}{c|c|cccccccccc}
    \toprule
    \textbf{Prop.} & \textbf{Metrics} & NOTEARS & COLEM & GraNDAG & CAM   & VI-DP-DAG & SCORE & DiffAN & DAGu.-L & DAGu.-N & CaPS \\
    \midrule
    \multicolumn{12}{c}{\textbf{SynER1 (d=20)}} \\
    \midrule
    \multirow{3}[0]{*}{\textbf{0}} & \textbf{SHD} & 17.4±1.7 & 15.2±1.8 & 5.3±1.2 & 1.4±0.8 & 123.2±4.9 & 1.4±0.8 & 6.2±3.7 & 23.6±2.0 & 18.3±2.6 & \textbf{0.8±0.4} \\
          & \textbf{SID} & 81.2±16.1 & 81.0±22.6 & 18.6±7.7 & 8.4±6.1 & 33.4±7.3 & 5.4±3.2 & 31.6±17.5 & 63.3±23.5 & 46.6±22.8 & \textbf{3.4±2.8} \\
          & \textbf{F1} & 27.0±11.1 & 41.3±8.9 & 82.0±6.3 & 95.5±2.6 & 18.4±2.1 & 96.5±2.3 & 79.9±11.4 & 44.0±10.8 & 56.8±6.6 & \textbf{98.1±1.0} \\
          \multirow{3}[0]{*}{\textbf{0.25}} & \textbf{SHD} & 14.8±1.6 & 14.2±2.0 & 5.3±0.9 & 3.4±1.3 & 118.4±7.4 & \textbf{0.4±0.4} & 4.6±2.0 & 18.0±1.6 & 19.3±3.2 & 1.2±1.1 \\
          & \textbf{SID} & 73.8±14.3 & 78.0±21.7 & 32.0±9.8 & 18.2±10.0 & 56.8±23.9 & \textbf{3.8±7.6} & 32.2±15.0 & 39.0±2.1 & 64.0±33.9 & 7.6±10.2 \\
          & \textbf{F1} & 44.1±5.7 & 43.9±12.7 & 79.8±6.0 & 87.1±5.4 & 16.4±3.1 & \textbf{98.6±1.8} & 81.9±8.5 & 61.5±3.7 & 52.6±10.2 & 96.0±3.9 \\
          \multirow{3}[0]{*}{\textbf{0.5}} & \textbf{SHD} & 10.8±1.7 & 10.0±1.6 & 9.6±2.6 & 5.4±2.6 & 125.4±8.3 & 4.6±3.9 & 6.6±2.6 & 25.3±6.6 & 27.0±5.0 & \textbf{4.2±3.0} \\
          & \textbf{SID} & 56.6±13.3 & 53.6±16.9 & 63.3±21.6 & 21.0±9.5 & 85.2±20.0 & 17.2±13.3 & 24.2±12.5 & 72.3±37.0 & 40.6±9.7 & \textbf{17.0±11.4} \\
          & \textbf{F1} & 64.1±7.5 & 64.3±8.4 & 66.2±6.1 & 81.4±6.7 & 11.2±2.2 & 87.7±9.9 & 78.8±09.1 & 50.6±7.6 & 51.2±0.2 & \textbf{94.9±4.6} \\
          \multirow{3}[0]{*}{\textbf{0.75}} & \textbf{SHD} & 8.8±1.7 & 7.8±2.9 & 16.6±3.3 & 10.2±2.6 & 119.6±7.3 & 3.8±2.3 & 9.0±3.3 & 21.6±10.2 & 26.3±1.2 & \textbf{1.6±1.6} \\
          & \textbf{SID} & 46.6±10.4 & 45.6±21.6 & 67.6±10.4 & 57.2±18.8 & 83.4±44.2 & 17.4±13.6 & 41.6±13.3 & 49.3±36.5 & 56.6±3.3 & \textbf{11.8±10.9} \\
          & \textbf{F1} & 72.5±4.3 & 7.14±11.0 & 43.4±13.6 & 63.1±10.7 & 13.8±4.9 & 88.2±4.4 & 70.9±6.7 & 56.1±16.9 & 48.5±2.0 & \textbf{94.9±4.6} \\
          \multirow{3}[0]{*}{\textbf{1}} & \textbf{SHD} & 7.0±0.8 & 4.4±1.4 & 18.3±0.4 & 12.8±2.9 & 119.2±8.2 & 6.2±2.0 & 9.8±1.7 & 27.0±8.2 & 26.3±0.9 & \textbf{2.0±2.2} \\
          & \textbf{SID} & 41.8±12.2 & 24.0±2.2 & 74.6±6.5 & 60.2±20.0 & 84.6±35.4 & 30.6±22.6 & 34.8±5.8 & 51.6±25.7 & 64.6±6.7 & \textbf{12.4±12.1} \\
          & \textbf{F1} & 78.6±4.6 & 84.1±3.8 & 40.5±8.2 & 55.5±12.4 & 14.1±4.7 & 80.4±4.4 & 69.6±4.3 & 51.9±8.6 & 46.3±0.5 & \textbf{93.7±6.0} \\
          \cmidrule{1-12}
          \multicolumn{2}{c|}{\textbf{Training time}} & 6.7±0.6 & 36.7±0.7 & 990.1±93.9 & 327.7±5.7 & 343.1±110.6 & 29.8±1.2 & 80.9±1.0 & 33.2±2.6 & 45.4±2.6 & 15.8±3.3 \\
    \midrule
    \multicolumn{12}{c}{\textbf{SynER1 (d=50)}} \\
    \midrule
    \multirow{3}[0]{*}{\textbf{0}} & \textbf{SHD} & 43.0±4.8 & 39.8±4.7 & 26.6±7.7 & \textbf{5.2±2.8} & 795.2±29.5 & 6.6±3.0 & 17.0±3.4 & 96.3±13.9 & 56.3±6.2 & 7.2±4.4 \\
          & \textbf{SID} & 281.2±114.3 & 270.4±94.0 & 173.3±77.6 & 25.2±7.7 & 198.6±79.5 & \textbf{24.8±13.2} & 105.8±55.7 & 262.0±137.3 & 168.3±50.0 & 56.6±49.0 \\
          & \textbf{F1} & 27.5±6.5 & 34.3±6.1 & 60.1±12.4 & \textbf{94.3±2.1} & 6.9±1.1 & 93.2±3.0 & 77.8±4.6 & 34.5±3.9 & 52.4±0.4 & 91.4±6.2 \\
          \multirow{3}[0]{*}{\textbf{0.25}} & \textbf{SHD} & 38.0±6.0 & 32.6±7.3 & 30.0±4.9 & 12.2±9.8 & 806.4±17.8 & 12.0±3.5 & 16.8±3.5 & 98.6±16.7 & 67.6±9.2 & \textbf{11.4±1.8} \\
          & \textbf{SID} & 248.0±109.4 & 254.6±94.5 & 168.6±28.7 & \textbf{63.2±41.2} & 297.8±80.9 & 79.2±18.6 & 109.0±51.9 & 294.0±130.9 & 195.3±94.4 & 68.4±18.6 \\
          & \textbf{F1} & 40.7±10.6 & 48.5±11.4 & 58.4±10.2 & 84.7±10.5 & 5.7±5.1 & 84.4±5.4 & 76.8±4.0 & 32.5±3.6 & 47.5±2.4 & \textbf{85.7±3.3} \\
          \multirow{3}[0]{*}{\textbf{0.5}} & \textbf{SHD} & 29.8±4.3 & 27.4±1.9 & 25.6±6.9 & 16.4±7.9 & 816.6±35.5 & 12.0±5.5 & 20.8±4.7 & 94.0±10.6 & 80.3±9.5 & \textbf{11.4±4.4} \\
          & \textbf{SID} & 201.4±113.8 & 186.6±57.8 & 143.0±8.8 & 76.0±46.6 & 231.6±75.7 & \textbf{65.2±48.2} & 134.8±82.8 & 285.3±175.8 & 218.6±117.7 & 72.8±47.5 \\
          & \textbf{F1} & 58.4±6.7 & 58.3±6.7 & 62.1±13.0 & 79.0±9.6 & 6.2±0.7 & 85.6±6.8 & 72.8±4.3 & 37.4±2.5 & 43.1±5.2 & \textbf{86.5±5.4} \\
          \multirow{3}[0]{*}{\textbf{0.75}} & \textbf{SHD} & 26.8±2.6 & 23.2±3.3 & 37.6±5.7 & 26.4±8.8 & 789.6±26.2 & 11.8±4.3 & 19.4±2.3 & 85.6±7.4 & 103.0±27.7 & \textbf{8.2±3.0} \\
          & \textbf{SID} & 190.6±86.4 & 190.8±77.2 & 267.6±78.8 & 146.0±58.3 & 303.6±110.9 & 53.2±19.5 & 94.2±40.0 & 224.0±104.2 & 165.3±78.2 & \textbf{35.8±18.2} \\
          & \textbf{F1} & 62.7±7.1 & 64.8±3.7 & 46.2±10.5 & 66.0±7.9 & 5.7±0.7 & 85.4±5.0 & 75.9±2.9 & 41.1±0.8 & 40.4±4.9 & \textbf{90.0±3.9} \\
          \multirow{3}[0]{*}{\textbf{1}} & \textbf{SHD} & 18.2±2.9 & 18.8±5.5 & 40.0±1.6 & 34.0±11.8 & 782.8±35.2 & 10.2±4.9 & 21.2±7.2 & 94.0±11.4 & 120.3±6.6 & \textbf{6.2±2.8} \\
          & \textbf{SID} & 131.2±64.7 & 139.2±59.1 & 258.3±98.3 & 214.4±118.5 & 344.4±165.1 & 69.8±42.9 & 102.8±67.9 & 152.6±44.7 & 198.6±99.1 & \textbf{36.8±21.8} \\
          & \textbf{F1} & 77.4±5.6 & 72.3±7.0 & 47.2±3.5 & 58.1±9.9 & 5.6±0.9 & 86.6±5.7 & 74.2±6.4 & 40.0±1.3 & 34.6±1.7 & \textbf{92.3±3.5} \\
          \cmidrule{1-12}
          \multicolumn{2}{c|}{\textbf{Training time}} & 29.6±6.5 & 40.1±0.4 & 3.1k±0.2k & 23k±1.8k & 322.6±18.7 & 1.3k±38.4 & 1.3k±59.9 & 71.5±2.6 & 89.1±8.9 & 319.8±98.8 \\
    \midrule
    \multicolumn{12}{c}{\textbf{SynER1 (n=1000)}} \\
    \midrule
    
    \multirow{3}[0]{*}{\textbf{0}} & \textbf{SHD} & 6.6±1.3 & 6.2±2.2 & 3.0±1.4 & 0.8±1.1 & 39.0±1.2 & 0.6±1.2 & 2.6±1.8 & 12.6±3.3 & 11.0±1.6 & \textbf{0.4±0.8} \\
          & \textbf{SID} & 21.4±12.5 & 24.4±14.4 & 7.6±6.2 & 0.6±1.2 & 14.6±5.6 & 0.6±1.2 & 10.8±11.9 & 8.6±3.6 & 9.2±4.2 & \textbf{0.6±1.2} \\
          & \textbf{F1} & 38.4±10.9 & 36.3±24.2 & 78.8±9.5 & 95.9±6.1 & 22.2±4.2 & 96.8±6.3 & 79.6±13.4 & 48.1±6.3 & 52.7±7.4 & \textbf{97.8±4.4} \\
          \multirow{3}[0]{*}{\textbf{0.25}} & \textbf{SHD} & 6.6±1.6 & 6.4±2.1 & 3.3±1.8 & 2.6±2.2 & 39.4±2.5 & 1.6±1.3 & 2.6±1.8 & 9.3±2.6 & 9.2±2.4 & \textbf{1.0±1.2} \\
          & \textbf{SID} & 20.2±12.4 & 23.0±15.0 & 11.3±3.0 & 7.2±7.7 & 13.4±6.3 & 4.2±3.9 & 12.2±12.1 & 9.3±4.7 & 12.0±6.0 & \textbf{2.2±2.8} \\
          & \textbf{F1} & 36.1±22.3 & 36.0±22.7 & 68.8±6.3 & 79.0±15.4 & 20.6±9.2 & 86.6±12.4 & 78.8±14.9 & 50.7±10.2 & 52.5±5.1 & \textbf{93.3±8.2} \\
          \multirow{3}[0]{*}{\textbf{0.5}} & \textbf{SHD} & 6.2±1.9 & 5.0±1.8 & 4.3±1.2 & 3.8±2.7 & 39.4±1.3 & 2.8±1.6 & 3.8±0.9 & 11.3±1.6 & 12.2±1.6 & \textbf{2.0±0.8} \\
          & \textbf{SID} & 17.0±11.9 & 15.8±8.8 & 13.6±2.0 & 16.6±14.2 & 14.6±10.6 & 8.2±8.9 & 12.2±4.6 & 11.6±3.0 & 11.2±8.1 & \textbf{3.2±1.9} \\
          & \textbf{F1} & 44.2±18.8 & 54.1±17.2 & 56.5±10.0 & 67.3±22.4 & 20.7±15.7 & 78.6±9.7 & 64.5±10.8 & 42.8±3.9 & 44.8±6.7 & \textbf{86.8±5.1} \\
          \multirow{3}[0]{*}{\textbf{0.75}} & \textbf{SHD} & 4.2±1.7 & 5.2±2.9 & 5.3±2.0 & 1.8±0.4 & 39.6±2.2 & 2.2±1.1 & 3.6±2.1 & 9.0±2.4 & 11.0±1.6 & \textbf{1.4±0.8} \\
          & \textbf{SID} & 10.8±8.2 & 18.0±16.1 & 16.3±1.6 & 5.8±4.0 & 17.6±10.9 & 7.6±6.6 & 10.0±6.9 & \textbf{4.3±4.1} & 6.0±2.1 & 6.2±6.2 \\
          & \textbf{F1} & 68.9±12.6 & 56.1±25.6 & 49.5±12.1 & 83.5±4.3 & 19.9±4.9 & 81.1±11.1 & 70.3±12.5 & 60.5±6.9 & 50.1±7.6 & \textbf{89.3±6.9} \\
          \multirow{3}[0]{*}{\textbf{1}} & \textbf{SHD} & 3.0±1.0 & 4.4±2.6 & 7.3±4.7 & 2.2±1.4 & 40.0±2.1 & 1.8±0.9 & 3.2±1.9 & 10.6±3.6 & 13.3±2.0 & \textbf{1.2±0.7} \\
          & \textbf{SID} & 8.4±8.9 & 17.0±16.5 & 16.6±6.0 & 9.6±7.9 & 18.0±10.4 & 9.0±6.1 & 10.6±6.5 & \textbf{5.0±4.5} & 6.0±2.1 & 5.6±6.6 \\
          & \textbf{F1} & 79.6±6.7 & 61.5±23.3 & 43.5±17.6 & 77.2±15.6 & 18.5±7.8 & 79.9±11.0 & 71.2±10.5 & 54.5±11.1 & 44.7±8.9 & \textbf{90.2±6.7} \\
          \cmidrule{1-12}
          \multicolumn{2}{c|}{\textbf{Training time}} & 2.1±0.7 & 32.2±1.3 & 471.5±33.7 & 9.9±0.4 & 151.1±33.9 & 4.0±0.9 & 33.3±2.0 & 36.5±42.6 & 34.9±3.7 & 3.5±0.7 \\
    \midrule
    \multicolumn{12}{c}{\textbf{SynER1 (n=5000)}} \\
    \midrule
    \multirow{3}[0]{*}{\textbf{0}} & \textbf{SHD} & 6.4±1.0 & 5.6±1.2 & 0.3±0.4 & \textbf{0.0±0.0} & 37.8±1.5 & 0.6±0.4 & 2.8±1.8 & 3.6±1.2 & 3.0±0.0 & 0.4±0.8 \\
          & \textbf{SID} & 19.4±9.2 & 18.0±9.8 & 1.3±1.8 & \textbf{0.0±0.0} & 8.4±7.8 & 2.4±1.9 & 10.8±8.7 & 6.33±3.8 & 4.3±4.7 & 0.6±1.2 \\
          & \textbf{F1} & 41.1±10.7 & 50.6±5.7 & 97.7±3.1 & \textbf{100±0.0} & 26.6±5.5 & 93.2±5.5 & 74.1±13.5 & 71.8±13.3 & 80.6±2.6 & 96.8±6.3 \\
          \multirow{3}[0]{*}{\textbf{0.25}} & \textbf{SHD} & 6.2±2.3 & 5.2±1.6 & 1.0±0.8 & 0.6±0.8 & 39.0±1.0 & 2.0±0.6 & 3.0±2.5 & 3.0±1.6 & 3.0±2.1 & \textbf{0.0±0.0} \\
          & \textbf{SID} & 17.0±10.7 & 16.4±11.2 & 3.6±2.6 & 2.2±2.7 & 11.8±7.1 & 6.0±2.8 & 11.8±9.7 & 8.0±4.3 & 6.0±2.1 & \textbf{0.0±0.0} \\
          & \textbf{F1} & 39.9±28.1 & 57.3±12.5 & 91.0±6.3 & 94.6±6.5 & 22.2±3.8 & 81.1±6.7 & 75.1±19.5 & 70.2±19.9 & 81.0±10.7 & \textbf{100±0.0} \\
          \multirow{3}[0]{*}{\textbf{0.5}} & \textbf{SHD} & 5.4±1.8 & 5.4±2.2 & 2.0±0.8 & 1.6±1.0 & 39.4±0.4 & 4.6±1.6 & 4.4±3.6 & 2.3±1.2 & 4.3±3.2 & \textbf{1.4±1.0} \\
          & \textbf{SID} & 12.8±7.3 & 16.4±11.4 & 6.3±1.2 & \textbf{2.2±2.7} & 16.6±8.0 & 14.0±6.4 & 12.4±11.1 & 5.3±1.6 & 6.6±1.2 & 5.4±5.7 \\
          & \textbf{F1} & 54.0±17.2 & 54.0±17.6 & 81.5±7.1 & 84.1±9.9 & 20.8±1.7 & 62.0±14.9 & 68.1±23.9 & 78.3±8.2 & 73.1±10.8 & \textbf{87.7±7.6} \\
          \multirow{3}[0]{*}{\textbf{0.75}} & \textbf{SHD} & 4.2±1.1 & 4.8±3.0 & 4.6±1.8 & 6.2±2.9 & 39.8±1.7 & 5.4±2.6 & 6.0±2.6 & 5.6±3.3 & 6.0±2.8 & \textbf{2.2±2.0} \\
          & \textbf{SID} & 11.2±7.6 & 17.2±16.1 & 15.0±2.1 & 20.8±9.5 & 19.4±11.1 & 14.2±6.9 & 17.2±12.1 & 8.0±4.9 & 8.0±7.0 & \textbf{4.4±2.8} \\
          & \textbf{F1} & 69.0±6.7 & 56.8±26.9 & 54.8±17.9 & 48.6±25.2 & 19.2±6.0 & 62.5±14.5 & 56.4±22.8 & 65.4±14.5 & 64.3±11.6 & \textbf{84.0±10.8} \\
          \multirow{3}[0]{*}{\textbf{1}} & \textbf{SHD} & \textbf{3.2±0.9} & 4.8±2.4 & 9.0±3.5 & 6.2±1.6 & 40.4±1.3 & 6.6±3.7 & 6.2±2.2 & 5.3±2.8 & 5.6±3.6 & 3.4±1.9 \\
          & \textbf{SID} & 9.0±8.6 & 18.4±14.9 & 20.3±5.1 & 17.0±8.7 & 22.0±8.2 & 17.0±6.9 & 17.8±9.6 & \textbf{4.3±1.8} & 4.6±0.9 & 10.2±3.5 \\
          & \textbf{F1} & \textbf{77.5±6.8} & 56.1±20.4 & 35.3±3.7 & 48.5±19.3 & 17.0±4.7 & 52.3±14.0 & 48.6±21.5 & 69.3±8.8 & 69.2±14.7 & 74.7±10.6 \\
          \cmidrule{1-12}
          \multicolumn{2}{c|}{\textbf{Training time}} & 3.9±0.6 & 32.7±0.6 & 551.1±90.5 & 55.7±1.1 & 458.5±194.5 & 14.3±0.8 & 42.4±13.5 & 65.5±33.3 & 36.6±2.7 & 15.0±0.4 \\
    \bottomrule
    \end{tabular}%
  \label{tab:large_scale}%
\end{table}%
% Fig \ref{pruning_time} shows that the advantage of pre-pruning acceleration is diminished when the true causal graph is dense. Here, we choose the dataset with a dense causal graph (SynER4) and different sample sizes (1000, 2000 and 5000) to analysis the worst time cost of CaPS. The accuracy of 2000 samples is given in our manuscript, while time is 10.68±0.53 for nonlinear \& 9.20±1.42 for linear. Table \ref{tab:time_cost} shows that CaPS gains more advantage in larger sample sizes. The advantage of its performance over the second best baseline (SCORE) in linear and non-linear settings can be further improved (11.2\% \& 13.5\% increasing for F1 score with 5000 sample sizes). For the actual time cost of CaPS, since many CaPS operations are GPU-friendly, the actual time growth is close to linear (1000$\rightarrow$2000$\rightarrow$5000). Compared to the method with augmented Lagrangian training, our approach has a more acceptable cost although it needs to estimate the score's Jacobian. Besides, this complexity issue of score's Jacobian estimation could be partically allievated with diffusion model \cite{sanchez2023diffusion}.
Table~\ref{tab:large_scale} shows the experimental results of large-scale datasets and the actual-time cost. Here, we varied the settings of the synthetic dataset SynER1 to be: $d=20$, $d=50$, $n=1000$ and $n=5000$. For better presentation, we abbreviate the linear proportion as prop. and the baseline DAGuerreotype as DAGu. On different scales of datasets, CaPS consistently achieves the best performance in most linear proportions, especially in mixed scenarios (prop.=0.25, 0.5, 0.75). Our method also achieves the best or competitive results even in pure linear and pure nonlinear scenarios compared to existing strong baselines. About the actual-time cost, despite cubic complexity in samples size $n$, the bottleneck of actual-time growth often lies in the number of nodes $d$ in causal graph since many $n$-related operations are GPU-friendly. However, since CaPS uses pre-pruning for acceleration in the post-processing stage, our method run with competitive actual-time cost even in large-scale datasets. 

In terms of pruning time, most of the time is spent on CAM pruning. Pre-pruning can effectively remove the edges with low parent score, which can reduce the cost of fitting a generalized additive model. Here, we explore the speedup of pre-pruning on different datasets and sparsities. Fig. \ref{pruning_time} shows that our method effectively speeds up the pruning time on all datasets through pre-pruning. The symbols \emph{-L} and \emph{-N} in the dataset denote linear and nonlinear, respectively. Comparing SynER1 and SynER4, we can find that acceleration is more effective in sparse causal graphs than in dense ones. This is due to the fact that in sparse graphs, there are more edges with low confidence in parent score, and pre-pruning is more efficient. Moreover, the acceleration ratio of pre-pruning becomes more significant when the number of nodes $d$ grows, which makes a major contribution to the speed advantage of CaPS in larger-scale datasets.
\begin{figure}[h]
    \centering
    \includegraphics[width=1.0\linewidth]{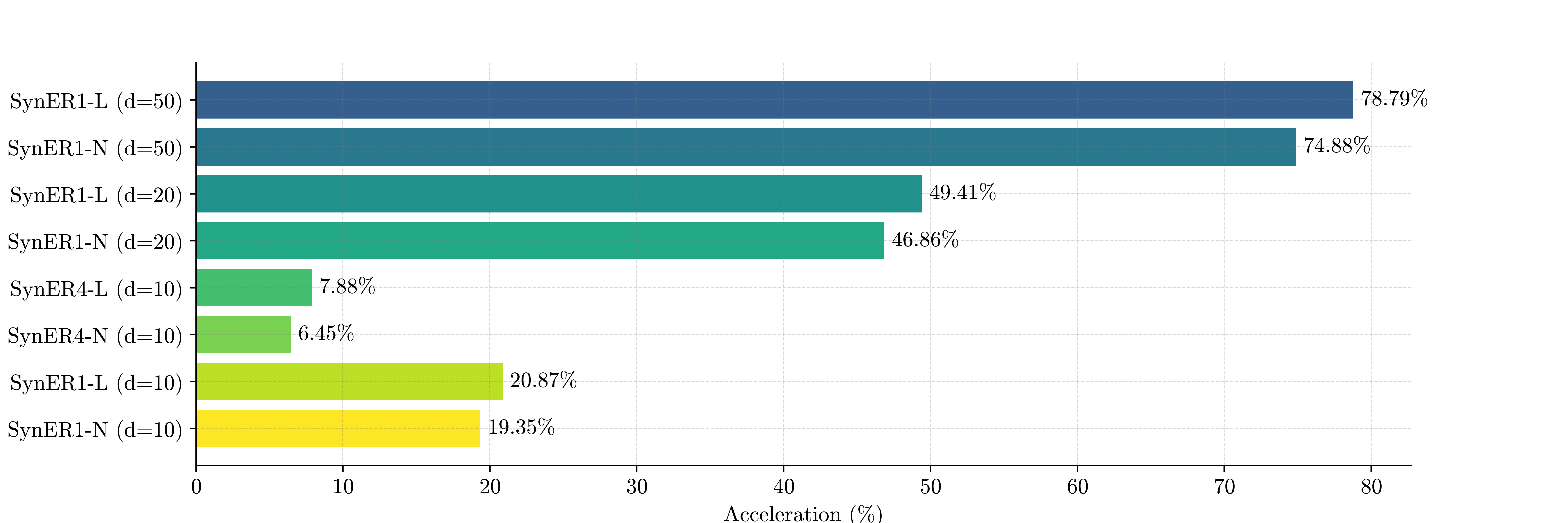}
    \caption{Percentage of acceleration using pre-pruning.}
    \label{pruning_time}
\end{figure}

\subsection{Order Divergence}
\label{sec:order_divergence}
\noindent\textbf{Order divergence.} Order divergence \cite{rolland2022score} is a measure of the discrepancy between the estimated topological ordering $\pi$ and the adjacency matrix of the true causal graph $\mathcal{A}$, which is expressed as:
\begin{equation}
    D_{top}(\pi, \mathcal{A})=\sum_{i=1}^d\sum_{j:\pi_i>\pi_j}\mathcal{A}_{i,j}
\end{equation}
Here, we compare two-stage ordering-based methods in datasets with different linear proportions and sparsity. The results in Figure~\ref{fig:order_ER1_ER4} show that CaPS has a much lower order divergence than \emph{sortnregress} in different linear proportions and sparsity, suggesting that variance is not a reliable measure of topological ordering in our synthetic datasets. CaPS consistently achieves the best order divergence in most of settings, which reflects the effectiveness of our method in recognizing the correct permutation.

\begin{figure*}[h]
\centering
  \subfigure[SynER1]{\includegraphics[width=0.45\linewidth]{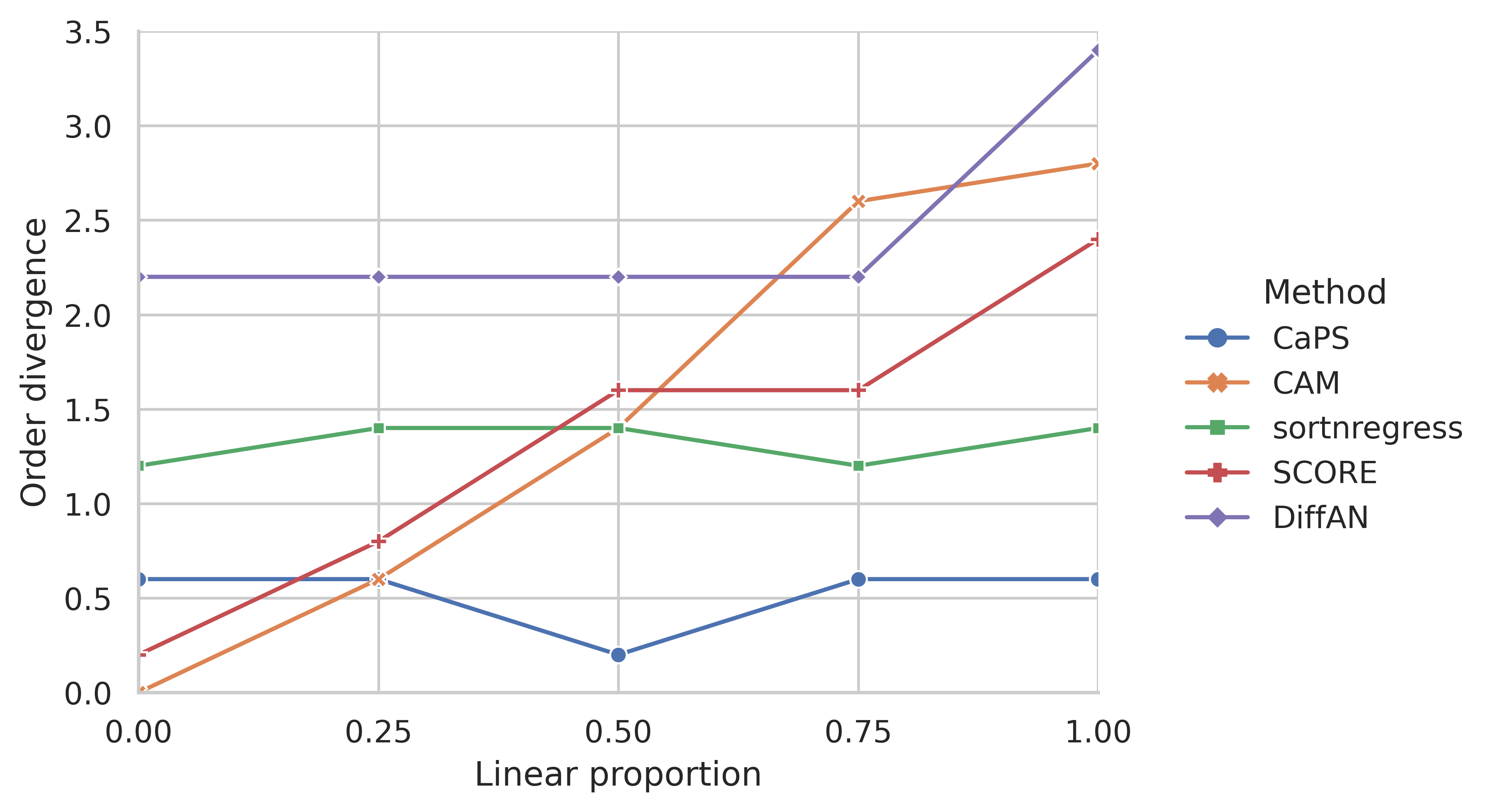}}
  \subfigure[SynER4]{\includegraphics[width=0.45\linewidth]{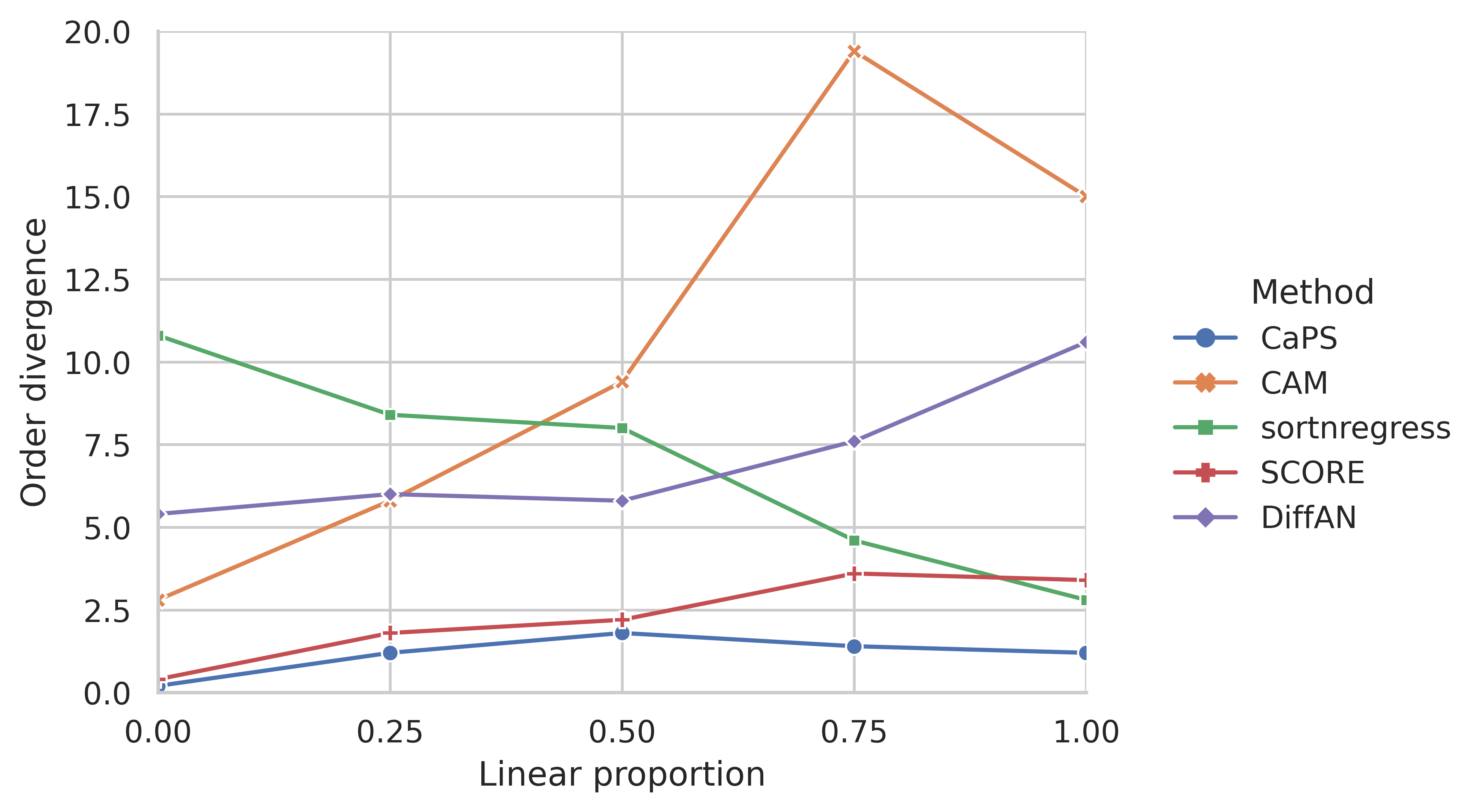}}
  \caption{Order Divergence of SynER1 and SynER4 with different linear proportions and sparsity.}
\label{fig:order_ER1_ER4}
\end{figure*}

\subsection{Beyond Our Assumption}
\label{sec:beyond_assumption}
Theoretically, according to the sufficient condition $(ii)$ of Assumption 1, our methods also has a good potential for unequal variance settings. We experimentally find that CaPS performs well under both Gaussian noise with unequal variance and non-Gaussian noise, as illustrated in Figs. \ref{fig:unequal_varience} and \ref{fig:NonGaussian}. More relaxed conditions for CaPS can be explored in future work.

\noindent\textbf{Unequal variances noise.} Beyond the equal variance noise, we generated datasets with unequal variance Gaussian noise. Specifically, following the most popular settings in previous works, for each variable $x_i$, the corresponding $\epsilon_i\sim\mathcal{N}(0, \sigma_i^2)$, where $\sigma_i^2$ are independently sampled uniformly in $U(0.4, 0.8)$. Fig. \ref{fig:unequal_varience} shows the experiment results of eight baselines. We can observe that CaPS performs second only to CAM for sparser (SynER1) graphs. However, CAM does not handle dense graphs well. For denser (SynER4) graphs, our method performs similarly to the best methods SCORE in almost all ranges. 

\begin{figure*}[h]
\centering
  \subfigure[SynER1]{\includegraphics[width=1.0\linewidth]{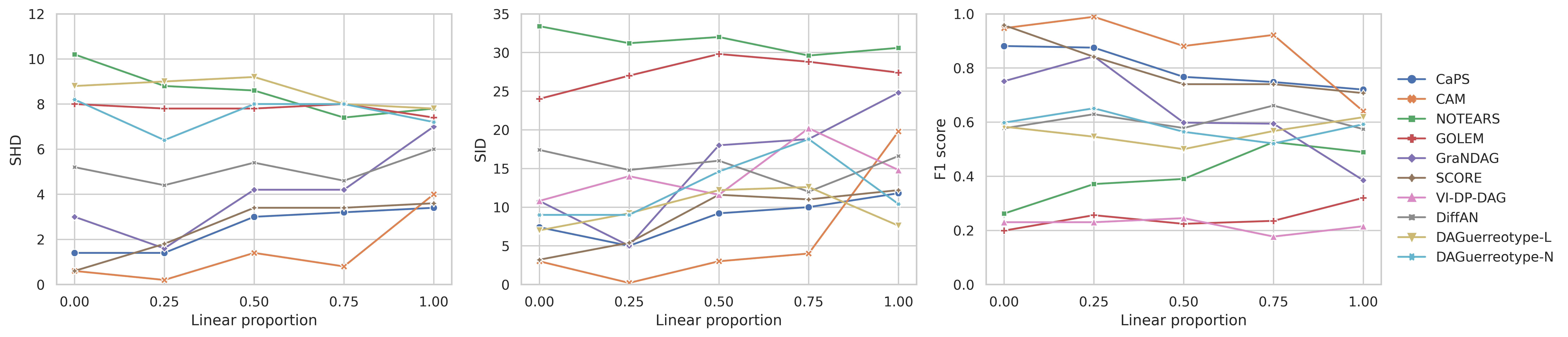}}
  \subfigure[SynER4] {\includegraphics[width=1.0\linewidth]{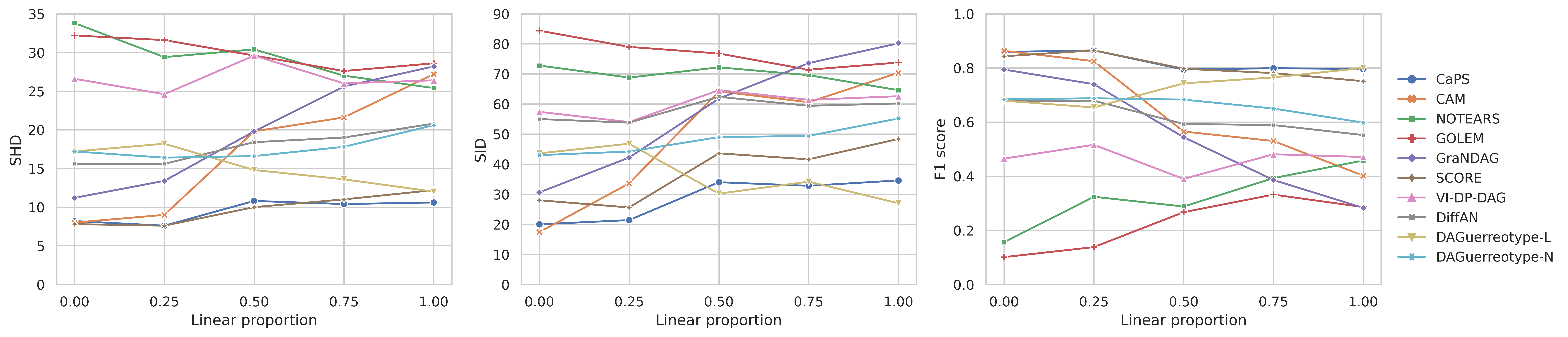}}
  \caption{Results of SynER1 and SynER4 with unequal variance Gaussian noise.}
    \label{fig:unequal_varience}
\end{figure*}

\noindent\textbf{Non-Gaussian noise.} We also experimentally explore the potential of CaPS for ANM with non-Gaussian noise. Specifically, for each variable $x_i$, we set the noise distribution to be Gumbel and Laplace. Fig. \ref{fig:NonGaussian} shows the experiment results of eight baselines. We can observe that CaPS performs consistently well for both Gumbel and Laplace noise in almost all ranges, especially when the linear proportions are greater than 0.25. 

\begin{figure*}[h]
\centering
  \subfigure[SynER1 (Gumbel noise)]{\includegraphics[width=1.0\linewidth]{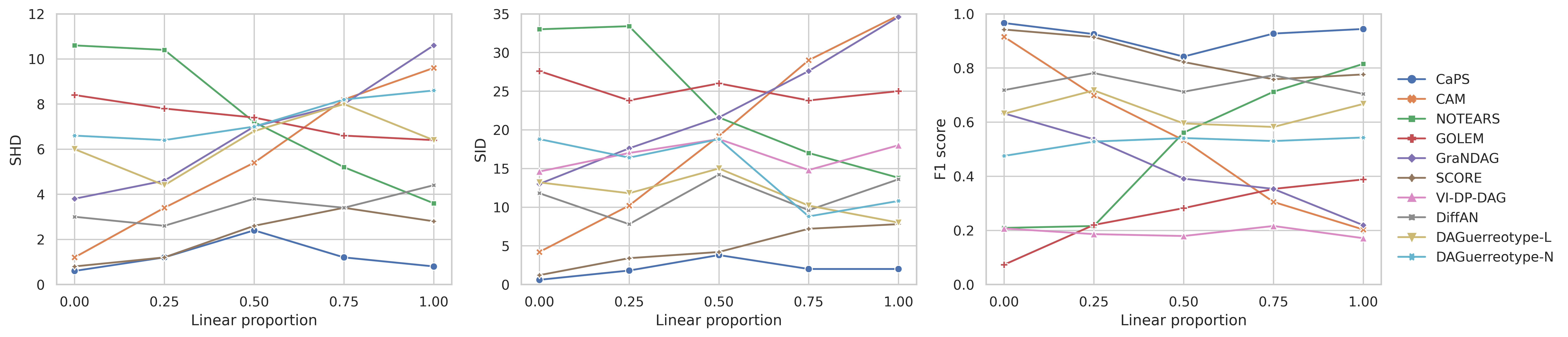}}
  \subfigure[SynER1 (Laplace noise)] {\includegraphics[width=1.0\linewidth]{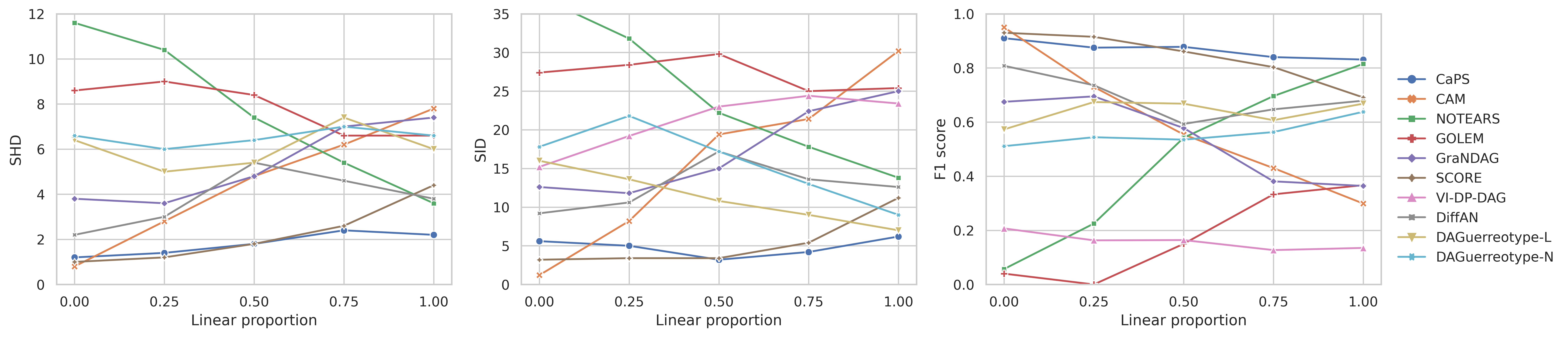}}
  \caption{Results of SynER1 with different non-Gaussian noise.}
    \label{fig:NonGaussian}
\end{figure*}

\newpage
\section*{NeurIPS Paper Checklist}

\begin{enumerate}

\item {\bf Claims}
    \item[] Question: Do the main claims made in the abstract and introduction accurately reflect the paper's contributions and scope?
    \item[] Answer: \answerYes{}
    \item[] Justification: This paper propose a method that works well in both linear and nonlinear, which has been clearly stated in the abstract and introduction.
    \item[] Guidelines:
    \begin{itemize}
        \item The answer NA means that the abstract and introduction do not include the claims made in the paper.
        \item The abstract and/or introduction should clearly state the claims made, including the contributions made in the paper and important assumptions and limitations. A No or NA answer to this question will not be perceived well by the reviewers. 
        \item The claims made should match theoretical and experimental results, and reflect how much the results can be expected to generalize to other settings. 
        \item It is fine to include aspirational goals as motivation as long as it is clear that these goals are not attained by the paper. 
    \end{itemize}

\item {\bf Limitations}
    \item[] Question: Does the paper discuss the limitations of the work performed by the authors?
    \item[] Answer: \answerYes{}
    \item[] Justification: The proposed method, CaPS, works under Assumption 1. The results beyond our assumptions are given in Sec.~\ref{sec:analysis_experiments} and Appendix~\ref{sec:beyond_assumption}. The computational efficiency are given in Sec.~\ref{sec:computational_complexity} and Appendix~\ref{sec:larger_scale}.
    \item[] Guidelines:
    \begin{itemize}
        \item The answer NA means that the paper has no limitation while the answer No means that the paper has limitations, but those are not discussed in the paper. 
        \item The authors are encouraged to create a separate "Limitations" section in their paper.
        \item The paper should point out any strong assumptions and how robust the results are to violations of these assumptions (e.g., independence assumptions, noiseless settings, model well-specification, asymptotic approximations only holding locally). The authors should reflect on how these assumptions might be violated in practice and what the implications would be.
        \item The authors should reflect on the scope of the claims made, e.g., if the approach was only tested on a few datasets or with a few runs. In general, empirical results often depend on implicit assumptions, which should be articulated.
        \item The authors should reflect on the factors that influence the performance of the approach. For example, a facial recognition algorithm may perform poorly when image resolution is low or images are taken in low lighting. Or a speech-to-text system might not be used reliably to provide closed captions for online lectures because it fails to handle technical jargon.
        \item The authors should discuss the computational efficiency of the proposed algorithms and how they scale with dataset size.
        \item If applicable, the authors should discuss possible limitations of their approach to address problems of privacy and fairness.
        \item While the authors might fear that complete honesty about limitations might be used by reviewers as grounds for rejection, a worse outcome might be that reviewers discover limitations that aren't acknowledged in the paper. The authors should use their best judgment and recognize that individual actions in favor of transparency play an important role in developing norms that preserve the integrity of the community. Reviewers will be specifically instructed to not penalize honesty concerning limitations.
    \end{itemize}

\item {\bf Theory Assumptions and Proofs}
    \item[] Question: For each theoretical result, does the paper provide the full set of assumptions and a complete (and correct) proof?
    \item[] Answer: \answerYes{}
    \item[] Justification: All assumptions and proofs are clearly stated in our manuscript and are detailed in Appendix~\ref{sec:theoretical_analysis}.
    \item[] Guidelines:
    \begin{itemize}
        \item The answer NA means that the paper does not include theoretical results. 
        \item All the theorems, formulas, and proofs in the paper should be numbered and cross-referenced.
        \item All assumptions should be clearly stated or referenced in the statement of any theorems.
        \item The proofs can either appear in the main paper or the supplemental material, but if they appear in the supplemental material, the authors are encouraged to provide a short proof sketch to provide intuition. 
        \item Inversely, any informal proof provided in the core of the paper should be complemented by formal proofs provided in appendix or supplemental material.
        \item Theorems and Lemmas that the proof relies upon should be properly referenced. 
    \end{itemize}

    \item {\bf Experimental Result Reproducibility}
    \item[] Question: Does the paper fully disclose all the information needed to reproduce the main experimental results of the paper to the extent that it affects the main claims and/or conclusions of the paper (regardless of whether the code and data are provided or not)?
    \item[] Answer: \answerYes{}
    \item[] Justification: The experimental settings are detailed in Sec.~\ref{sec:baselines_and_settings} and Appendix~\ref{sec:dataset_details}. All data, code, and documentation are in the Supplementary Material. All experiments can be easily reproduced with our code.
    \item[] Guidelines:
    \begin{itemize}
        \item The answer NA means that the paper does not include experiments.
        \item If the paper includes experiments, a No answer to this question will not be perceived well by the reviewers: Making the paper reproducible is important, regardless of whether the code and data are provided or not.
        \item If the contribution is a dataset and/or model, the authors should describe the steps taken to make their results reproducible or verifiable. 
        \item Depending on the contribution, reproducibility can be accomplished in various ways. For example, if the contribution is a novel architecture, describing the architecture fully might suffice, or if the contribution is a specific model and empirical evaluation, it may be necessary to either make it possible for others to replicate the model with the same dataset, or provide access to the model. In general. releasing code and data is often one good way to accomplish this, but reproducibility can also be provided via detailed instructions for how to replicate the results, access to a hosted model (e.g., in the case of a large language model), releasing of a model checkpoint, or other means that are appropriate to the research performed.
        \item While NeurIPS does not require releasing code, the conference does require all submissions to provide some reasonable avenue for reproducibility, which may depend on the nature of the contribution. For example
        \begin{enumerate}
            \item If the contribution is primarily a new algorithm, the paper should make it clear how to reproduce that algorithm.
            \item If the contribution is primarily a new model architecture, the paper should describe the architecture clearly and fully.
            \item If the contribution is a new model (e.g., a large language model), then there should either be a way to access this model for reproducing the results or a way to reproduce the model (e.g., with an open-source dataset or instructions for how to construct the dataset).
            \item We recognize that reproducibility may be tricky in some cases, in which case authors are welcome to describe the particular way they provide for reproducibility. In the case of closed-source models, it may be that access to the model is limited in some way (e.g., to registered users), but it should be possible for other researchers to have some path to reproducing or verifying the results.
        \end{enumerate}
    \end{itemize}

\item {\bf Open access to data and code}
    \item[] Question: Does the paper provide open access to the data and code, with sufficient instructions to faithfully reproduce the main experimental results, as described in supplemental material?
    \item[] Answer: \answerYes{}
    \item[] Justification: All data, code, and documentation are given in the Supplementary Material.
    \item[] Guidelines:
    \begin{itemize}
        \item The answer NA means that paper does not include experiments requiring code.
        \item Please see the NeurIPS code and data submission guidelines (\url{https://nips.cc/public/guides/CodeSubmissionPolicy}) for more details.
        \item While we encourage the release of code and data, we understand that this might not be possible, so “No” is an acceptable answer. Papers cannot be rejected simply for not including code, unless this is central to the contribution (e.g., for a new open-source benchmark).
        \item The instructions should contain the exact command and environment needed to run to reproduce the results. See the NeurIPS code and data submission guidelines (\url{https://nips.cc/public/guides/CodeSubmissionPolicy}) for more details.
        \item The authors should provide instructions on data access and preparation, including how to access the raw data, preprocessed data, intermediate data, and generated data, etc.
        \item The authors should provide scripts to reproduce all experimental results for the new proposed method and baselines. If only a subset of experiments are reproducible, they should state which ones are omitted from the script and why.
        \item At submission time, to preserve anonymity, the authors should release anonymized versions (if applicable).
        \item Providing as much information as possible in supplemental material (appended to the paper) is recommended, but including URLs to data and code is permitted.
    \end{itemize}

\item {\bf Experimental Setting/Details}
    \item[] Question: Does the paper specify all the training and test details (e.g., data splits, hyperparameters, how they were chosen, type of optimizer, etc.) necessary to understand the results?
    \item[] Answer: \answerYes{}
    \item[] Justification: The experimental settings are detailed in Sec.~\ref{sec:baselines_and_settings} and Appendix~\ref{sec:dataset_details}.
    \item[] Guidelines:
    \begin{itemize}
        \item The answer NA means that the paper does not include experiments.
        \item The experimental setting should be presented in the core of the paper to a level of detail that is necessary to appreciate the results and make sense of them.
        \item The full details can be provided either with the code, in appendix, or as supplemental material.
    \end{itemize}

\item {\bf Experiment Statistical Significance}
    \item[] Question: Does the paper report error bars suitably and correctly defined or other appropriate information about the statistical significance of the experiments?
    \item[] Answer: \answerYes{}
    \item[] Justification: See the results with standard deviation in Table \ref{real-word}, \ref{more_baselines} and \ref{tab:large_scale}.
    \item[] Guidelines:
    \begin{itemize}
        \item The answer NA means that the paper does not include experiments.
        \item The authors should answer "Yes" if the results are accompanied by error bars, confidence intervals, or statistical significance tests, at least for the experiments that support the main claims of the paper.
        \item The factors of variability that the error bars are capturing should be clearly stated (for example, train/test split, initialization, random drawing of some parameter, or overall run with given experimental conditions).
        \item The method for calculating the error bars should be explained (closed form formula, call to a library function, bootstrap, etc.)
        \item The assumptions made should be given (e.g., Normally distributed errors).
        \item It should be clear whether the error bar is the standard deviation or the standard error of the mean.
        \item It is OK to report 1-sigma error bars, but one should state it. The authors should preferably report a 2-sigma error bar than state that they have a 96\% CI, if the hypothesis of Normality of errors is not verified.
        \item For asymmetric distributions, the authors should be careful not to show in tables or figures symmetric error bars that would yield results that are out of range (e.g. negative error rates).
        \item If error bars are reported in tables or plots, The authors should explain in the text how they were calculated and reference the corresponding figures or tables in the text.
    \end{itemize}

\item {\bf Experiments Compute Resources}
    \item[] Question: For each experiment, does the paper provide sufficient information on the computer resources (type of compute workers, memory, time of execution) needed to reproduce the experiments?
    \item[] Answer: \answerYes{}
    \item[] Justification: See Appendix~\ref{sec:experiments}.
    \item[] Guidelines:
    \begin{itemize}
        \item The answer NA means that the paper does not include experiments.
        \item The paper should indicate the type of compute workers CPU or GPU, internal cluster, or cloud provider, including relevant memory and storage.
        \item The paper should provide the amount of compute required for each of the individual experimental runs as well as estimate the total compute. 
        \item The paper should disclose whether the full research project required more compute than the experiments reported in the paper (e.g., preliminary or failed experiments that didn't make it into the paper). 
    \end{itemize}
    
\item {\bf Code Of Ethics}
    \item[] Question: Does the research conducted in the paper conform, in every respect, with the NeurIPS Code of Ethics \url{https://neurips.cc/public/EthicsGuidelines}?
    \item[] Answer: \answerYes{}
    \item[] Justification: Yes, this paper conform the the NeurIPS Code of Ethics.
    \item[] Guidelines:
    \begin{itemize}
        \item The answer NA means that the authors have not reviewed the NeurIPS Code of Ethics.
        \item If the authors answer No, they should explain the special circumstances that require a deviation from the Code of Ethics.
        \item The authors should make sure to preserve anonymity (e.g., if there is a special consideration due to laws or regulations in their jurisdiction).
    \end{itemize}

\item {\bf Broader Impacts}
    \item[] Question: Does the paper discuss both potential positive societal impacts and negative societal impacts of the work performed?
    \item[] Answer: \answerNA{}
    \item[] Justification: This paper is a foundational research and has no direct negative social impacts.
    \item[] Guidelines:
    \begin{itemize}
        \item The answer NA means that there is no societal impact of the work performed.
        \item If the authors answer NA or No, they should explain why their work has no societal impact or why the paper does not address societal impact.
        \item Examples of negative societal impacts include potential malicious or unintended uses (e.g., disinformation, generating fake profiles, surveillance), fairness considerations (e.g., deployment of technologies that could make decisions that unfairly impact specific groups), privacy considerations, and security considerations.
        \item The conference expects that many papers will be foundational research and not tied to particular applications, let alone deployments. However, if there is a direct path to any negative applications, the authors should point it out. For example, it is legitimate to point out that an improvement in the quality of generative models could be used to generate deepfakes for disinformation. On the other hand, it is not needed to point out that a generic algorithm for optimizing neural networks could enable people to train models that generate Deepfakes faster.
        \item The authors should consider possible harms that could arise when the technology is being used as intended and functioning correctly, harms that could arise when the technology is being used as intended but gives incorrect results, and harms following from (intentional or unintentional) misuse of the technology.
        \item If there are negative societal impacts, the authors could also discuss possible mitigation strategies (e.g., gated release of models, providing defenses in addition to attacks, mechanisms for monitoring misuse, mechanisms to monitor how a system learns from feedback over time, improving the efficiency and accessibility of ML).
    \end{itemize}
    
\item {\bf Safeguards}
    \item[] Question: Does the paper describe safeguards that have been put in place for responsible release of data or models that have a high risk for misuse (e.g., pretrained language models, image generators, or scraped datasets)?
    \item[] Answer: \answerNA{}
    \item[] Justification: No such risks.
    \item[] Guidelines:
    \begin{itemize}
        \item The answer NA means that the paper poses no such risks.
        \item Released models that have a high risk for misuse or dual-use should be released with necessary safeguards to allow for controlled use of the model, for example by requiring that users adhere to usage guidelines or restrictions to access the model or implementing safety filters. 
        \item Datasets that have been scraped from the Internet could pose safety risks. The authors should describe how they avoided releasing unsafe images.
        \item We recognize that providing effective safeguards is challenging, and many papers do not require this, but we encourage authors to take this into account and make a best faith effort.
    \end{itemize}

\item {\bf Licenses for existing assets}
    \item[] Question: Are the creators or original owners of assets (e.g., code, data, models), used in the paper, properly credited and are the license and terms of use explicitly mentioned and properly respected?
    \item[] Answer: \answerYes{}
    \item[] Justification: All data and methods are explicitly mentioned.
    \item[] Guidelines:
    \begin{itemize}
        \item The answer NA means that the paper does not use existing assets.
        \item The authors should cite the original paper that produced the code package or dataset.
        \item The authors should state which version of the asset is used and, if possible, include a URL.
        \item The name of the license (e.g., CC-BY 4.0) should be included for each asset.
        \item For scraped data from a particular source (e.g., website), the copyright and terms of service of that source should be provided.
        \item If assets are released, the license, copyright information, and terms of use in the package should be provided. For popular datasets, \url{paperswithcode.com/datasets} has curated licenses for some datasets. Their licensing guide can help determine the license of a dataset.
        \item For existing datasets that are re-packaged, both the original license and the license of the derived asset (if it has changed) should be provided.
        \item If this information is not available online, the authors are encouraged to reach out to the asset's creators.
    \end{itemize}

\item {\bf New Assets}
    \item[] Question: Are new assets introduced in the paper well documented and is the documentation provided alongside the assets?
    \item[] Answer: \answerYes{}
    \item[] Justification: All data, code, and documentation are given in the Supplementary Material.
    \item[] Guidelines:
    \begin{itemize}
        \item The answer NA means that the paper does not release new assets.
        \item Researchers should communicate the details of the dataset/code/model as part of their submissions via structured templates. This includes details about training, license, limitations, etc. 
        \item The paper should discuss whether and how consent was obtained from people whose asset is used.
        \item At submission time, remember to anonymize your assets (if applicable). You can either create an anonymized URL or include an anonymized zip file.
    \end{itemize}

\item {\bf Crowdsourcing and Research with Human Subjects}
    \item[] Question: For crowdsourcing experiments and research with human subjects, does the paper include the full text of instructions given to participants and screenshots, if applicable, as well as details about compensation (if any)? 
    \item[] Answer: \answerNA{}
    \item[] Justification: This paper does not involve human subjects.
    \item[] Guidelines: 
    \begin{itemize}
        \item The answer NA means that the paper does not involve crowdsourcing nor research with human subjects.
        \item Including this information in the supplemental material is fine, but if the main contribution of the paper involves human subjects, then as much detail as possible should be included in the main paper. 
        \item According to the NeurIPS Code of Ethics, workers involved in data collection, curation, or other labor should be paid at least the minimum wage in the country of the data collector. 
    \end{itemize}

\item {\bf Institutional Review Board (IRB) Approvals or Equivalent for Research with Human Subjects}
    \item[] Question: Does the paper describe potential risks incurred by study participants, whether such risks were disclosed to the subjects, and whether Institutional Review Board (IRB) approvals (or an equivalent approval/review based on the requirements of your country or institution) were obtained?
    \item[] Answer: \answerNA{}
    \item[] Justification: This paper does not involve human subjects.
    \item[] Guidelines:
    \begin{itemize}
        \item The answer NA means that the paper does not involve crowdsourcing nor research with human subjects.
        \item Depending on the country in which research is conducted, IRB approval (or equivalent) may be required for any human subjects research. If you obtained IRB approval, you should clearly state this in the paper. 
        \item We recognize that the procedures for this may vary significantly between institutions and locations, and we expect authors to adhere to the NeurIPS Code of Ethics and the guidelines for their institution. 
        \item For initial submissions, do not include any information that would break anonymity (if applicable), such as the institution conducting the review.
    \end{itemize}

\end{enumerate}

\end{document}